\documentclass{article}

\usepackage{iclr2027_conference,times}
\usepackage{amsmath,amssymb,amsthm,bm,mathtools}
\usepackage{booktabs}
\usepackage{graphicx}
\usepackage{subcaption}
\usepackage{placeins}
\usepackage{hyperref}
\hypersetup{pdfborder={0 0 1.25}}
\usepackage{url}

\newtheorem{theorem}{Theorem}[section]
\newtheorem{lemma}{Lemma}[section]
\newtheorem{corollary}[theorem]{Corollary}
\newtheorem{proposition}[theorem]{Proposition}
\theoremstyle{definition}

\theoremstyle{remark}
\newtheorem{remark}{Remark}[section]

\newcommand{\R}{\mathbb{R}}
\newcommand{\one}{\bm{1}}
\newcommand{\zero}{\bm{0}}
\newcommand{\Eqn}{Eqn.~}
\newcommand{\norm}[1]{\left\lVert #1\right\rVert}
\newcommand{\supnorm}[1]{\left\lVert #1\right\rVert_{L^\infty(K)}}

\title{\raggedright Activation-Adjacent Layer Normalization: Effects on Approximation Power}
\author{%
\begin{tabular}{c}
\textbf{Yunhao Ni$^{1}$, Yuxin Guo$^{1}$, Jie Luo$^{1}$, Wenjun Wu$^{1,2}$, Lei Huang$^{1,2}$}\thanks{Corresponding author: \texttt{huangleiAI@buaa.edu.cn}.}\\[0.7em]
{\normalfont $^{1}$State Key Laboratory of Complex and Critical Software Environment,}\\
{\normalfont Beihang University, Beijing, China}\\[0.2em]
{\normalfont $^{2}$Hangzhou International Innovation Institute, Beihang University, Hangzhou, China}
\end{tabular}}

\iclrfinalcopy

\begin{document}
\maketitle
\lhead{Preprint}

\begin{abstract}
\hyphenpenalty=5000\exhyphenpenalty=5000
Layer normalization (LN) and root mean square normalization (RMSNorm) depend
on their inputs and remain nonlinear at inference. Inserting either next to
an activation may therefore change the function a network represents. We
study when shallow activation-normalized (AN) and preactivation-normalized
(PN) networks can approximate a prescribed width-$N$ one-hidden-layer MLP.
Viewing RMSNorm as input-dependent scaling, we use width
augmentation to locally linearize normalization on compact feature sets. For
AN networks, an unbounded activation range allows width $N+1$, while every
continuous nonconstant activation admits width $N+r$ with error
$O_f(r^{-1})$. For PN networks, affine parameters or, without them,
degree-one positive homogeneity recover width $N+1$; general continuous
activations admit width $N+r$ with error tending to zero. The corresponding
LN constructions use centered coordinate pairs, yielding widths $N+2$ and
$N+2r$, respectively. We prove that $N+1$ for RMSNorm and $N+2$ for LN are also the
worst-case bound for approximations, and show that applicable universal-approximation theorems
for ordinary MLPs transfer to these normalized architectures.
Numerical experiments recover the predicted approximation rates for both
normalizations across all five settings and illustrate the shifted-ReLU
finite-width obstruction.
\end{abstract}

\section{Introduction}

Classical universal approximation results~\citep{cybenko1989approximation,
hornik1991approximation,leshno1993nonpolynomial} provide a basic foundation for
neural networks: sufficiently expressive classes can approximate large families
of target functions.  Modern architectures augment these standard networks
with components such as residual connections~\citep{he2016deep} and
normalization~\citep{ioffe2015batch,ba2016layer}, often to improve optimization.
The representational effects of residual architectures have also been
studied~\citep{lu2017expressive,lin2018resnet}.  Far less is known about how
normalization changes the function class represented by a network.

Among normalization mechanisms, this representational question is especially
subtle for layer normalization (LN)~\citep{ba2016layer} and its centering-free
variant, root mean square normalization (RMSNorm)~\citep{zhang2019rmsnorm}.
Both recompute their statistics from
each representation and therefore remain nonlinear at inference.  Recent
theory has primarily treated this nonlinearity as a source of expressivity.
\citet{ni2024nonlinearity} describe exact no-affine LN geometrically as
centering followed by scaling. Centering projects the representation onto the
zero-mean hyperplane. Scaling then maps the centered vector radially from the
origin onto a sphere within that hyperplane. The nonlinear step is therefore
RMSNorm applied to the centered vector, not to the original representation.
Exploiting this scaling,
\citet{ni2024nonlinearity} show that width-$3$ LN networks can fit arbitrary labels on finite samples
and derive a depth-dependent VC-dimension lower bound.  Building on this view,
\citet{ni2025pln} arrange multiple LN blocks with linear maps in parallel layer
normalization (PLN), treating LN itself as an approximation primitive to prove
universal approximation.  Whether adding normalization preserves a function
already represented by an MLP with its own coordinate-wise activation has
received little attention.

To study this preservation problem, we insert a single LN or RMSNorm after or
before the activation function of a prescribed one-hidden-layer MLP, obtaining
the activation-normalized (AN) architectures in
\Eqn\ref{eq:rms-an-network} and \Eqn\ref{eq:post-ln-network}, and the
preactivation-normalized (PN) architectures in
\Eqn\ref{eq:rms-pn-network} and \Eqn\ref{eq:pre-ln-network}.  Directly analyzing whether these architectures preserve the approximation
power of the underlying MLP is challenging because LN and RMSNorm couple the
hidden neurons~\citep{ni2024nonlinearity}. Therefore, we adopt a relative-approximation perspective and ask when such a
normalized architecture can approximate a prescribed target MLP.

For RMSNorm, the only distortion is input-dependent scaling. Inspired by the
geometry in \citet{ni2024nonlinearity}, we show that width augmentation moves a
bounded feature set away from the origin, thereby shrinking the angle it
subtends there and making the scaling factor nearly constant after an
input-independent rescaling (Figure~\ref{fig:ln-local-linearization}).
Section~\ref{sec:an-rms} uses this geometric mechanism to construct AN
approximations.

\begin{figure}[ht]
    \centering
    \includegraphics[width=0.94\textwidth]{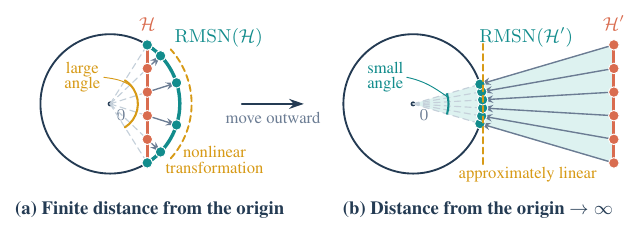}
    \caption{Near- and far-field geometry of RMSNorm. In (a), radial
    normalization maps equally spaced points in $\mathcal H$ to unevenly
    spaced points on the sphere. In (b), the equally sized set $\mathcal H'$
    lies farther from the origin. It subtends a smaller angle, and its
    normalized image occupies a shorter, nearly flat arc.}
    \label{fig:ln-local-linearization}
\end{figure}

For LN, centering introduces a second distortion. We absorb it on the linear
side of the activation: by imposing zero-mean preactivations in PN networks and
an equivalent zero-sum condition on the readout in AN networks. Width
augmentation then makes the remaining input-dependent scaling nearly constant.
Section~\ref{sec:layernorm-analysis} derives the corresponding LN counterparts
of the RMSNorm constructions.

Section~\ref{sec:lower-bounds-universality} establishes that the width-$N+1$
RMSNorm and width-$N+2$ LN bounds are worst-case sharp and shows that applicable
universal-approximation theorems for ordinary MLPs transfer to the normalized
architectures. The experiments in Section~\ref{sec:experiments} recover the
predicted approximation rates.

We provide an overview of our theoretical results in
Table~\ref{tab:result-map}. Here $\checkmark$, $\times$, and
$\checkmark/\times$ indicate that affine normalization parameters are present,
absent, or arbitrary, respectively. The two AN forms coincide, and affine
RMSNorm contains only a coordinate-wise scale. In the lower-bound position,
$\text{--}$ indicates that no matching lower bound is claimed.

\begin{table}[!htb]
\centering
\small
\setlength{\tabcolsep}{5.5pt}
\caption{Width and error guarantees for a width-$N$ target.  Entries are
constructive upper bounds/worst-case lower bounds (UB/LB); matching entries are
sharp, and $\text{--}$ denotes that no matching lower bound is claimed.  All
rows assume continuous $\phi$.}
\label{tab:result-map}
\begin{tabular}{@{}lllccc@{}}
\toprule
Mode & Parameters & Additional condition on $\phi$ & RMSNorm (UB/LB) & LN (UB/LB) & Error\\
\midrule
AN
& $\checkmark/\times$
& unbounded range
& $N+1\,/\,N+1$ & $N+2\,/\,N+2$ & $<\varepsilon$\\
AN
& $\checkmark/\times$
& nonconstant
& $N+r\,/\,\text{--}$ & $N+2r\,/\,\text{--}$ & $\le C_f/r$\\
PN
& $\checkmark$
& none
& $N+1\,/\,N+1$ & $N+2\,/\,N+2$ & $<\varepsilon$\\
PN
& $\times$
& degree-one homogeneous
& $N+1\,/\,N+1$ & $N+2\,/\,N+2$ & $<\varepsilon$\\
PN
& $\times$
& none
& $N+r\,/\,\text{--}$ & $N+2r\,/\,\text{--}$ & $\to0$\\
\bottomrule
\end{tabular}
\end{table}

\section{Related work}

Classical universal approximation theory studies when a neural-network class is
dense in a target function space.  Foundational shallow-network
results~\citep{cybenko1989approximation,hornik1991approximation,
leshno1993nonpolynomial} cover sigmoidal activations and, more generally, broad
classes of nonpolynomial activations.  Work on approximation rates and
minimum-width conditions~\citep{barron1993universal,park2021minimum} addresses
related but distinct questions.  Width-limited fully connected ReLU networks
exhibit a non-universality threshold~\citep{lu2017expressive}, whereas deep
ResNets with one-neuron residual blocks can be universal in
$L^1$~\citep{lin2018resnet}.  Together, these results show that additional
architectural structure can materially change the function class represented
by a network.

Batch normalization~\citep{ioffe2015batch} and
LN~\citep{ba2016layer} were originally introduced to improve neural-network
training.  With fixed population statistics,
inference-time BN is a channelwise affine transformation and can be folded
into a neighboring affine layer.  LN is fundamentally different: its mean and
variance depend on the current representation, so it remains nonlinear at
inference.  \citet{ni2024nonlinearity} decompose LN into centering and scaling.
Geometrically, centering projects onto the zero-mean hyperplane, whereas scaling
projects onto a sphere.  Since centering is linear, their constructions exploit
the nonlinear scaling to show that width-$3$ LN networks can fit arbitrary
labels on finite samples and to derive a depth-dependent VC-dimension lower
bound.  Building on LN as a nonlinear approximation primitive, parallel layer
normalization networks~\citep{ni2025pln} combine multiple LN blocks with linear
maps and establish universal approximation directly for that architecture.

Unlike these studies, which use normalization itself as a source of expressivity,
we study a preservation problem.  The target MLP already contains a fixed
coordinate-wise activation; after inserting a single RMSNorm or LN, we determine
how much additional width suffices to recover its specified function.  Our
constructions use the centering--scaling decomposition in the opposite
direction: width augmentation makes scaling nearly input independent, while
preactivation or readout reparameterization absorbs LN's centering.  The lower
bounds show that this extra width can be unavoidable, and the universal
approximation results follow by transferring classical density through the
relative approximation rather than by constructing targets from LN blocks.

\section{Notation and preliminaries}

In this paper, ordinary italic letters such as $x$ denote scalars, bold lowercase
letters such as $\bm{x}$ denote vectors, and bold uppercase letters such as
$\bm{W}$ denote matrices.  We write $\one_d\in\R^d$ for the all-ones column
vector.  Unless stated otherwise, network inputs and outputs lie in
$\R^{d_x}$ and $\R^{d_y}$.

\subsection{Target one-hidden-layer MLPs}

In this paper, we take a prescribed width-$N$ one-hidden-layer MLP without
normalization as the approximation target, namely
\begin{equation}
    f(\bm{x})=\bm{W}_2\phi(\bm{W}_1\bm{x}+\bm{b}_1)+\bm{b}_2,
    \label{eq:target-mlp}
\end{equation}
where $\phi:\R\to\R$ acts coordinate-wise,
$\bm{W}_1\in\R^{N\times d_x}$, $\bm{b}_1\in\R^N$,
$\bm{W}_2\in\R^{d_y\times N}$, and $\bm{b}_2\in\R^{d_y}$.
We write $\mathcal M_N(\phi)$ for the class of functions represented by all
such networks.  In this paper, we take $\mathcal M_N(\phi)$ to be the target
function class.  Approximation is measured on a compact set
$K\subset\R^{d_x}$ using
$\norm{g}_{\infty,K}=\sup_{\bm{x}\in K}\norm{g(\bm{x})}$.  For each
$f\in\mathcal M_N(\phi)$, we fix one width-$N$ realization as in
\Eqn\ref{eq:target-mlp}; constants indexed by $f$ may depend on this realization
and on $K$.

Accordingly, we compare a normalized network directly with a target
$f\in\mathcal M_N(\phi)$.  For $\varepsilon>0$, a width-$d$ normalized
network $\widehat f$ is an $\varepsilon$-approximation of $f$ on $K$ if
\begin{equation}
    \norm{f-\widehat f}_{\infty,K}<\varepsilon.
    \label{eq:relative-approximation}
\end{equation}

\subsection{RMS normalization}
\label{subsec:rmsn-definition}

We begin with normalization by scale alone.  For $\bm{h}\in\R^d$, RMSN is
defined as $\sqrt d\,\bm{h}/\norm{\bm{h}}$ when $\bm{h}\ne\zero$ and as
$\zero$ when $\bm{h}=\zero$.
The map removes dependence on the input magnitude and sends every nonzero input
to the sphere $\{\bm{u}\in\R^d:\norm{\bm{u}}^2=d\}$. For
$\diamond\in\{0,\mathrm{aff}\}$, let
$\operatorname{RMSN}^{(0)}(\bm{h})=\operatorname{RMSN}(\bm{h})$ and
$\operatorname{RMSN}^{(\mathrm{aff})}(\bm{h})
=\bm{\alpha}\odot\operatorname{RMSN}(\bm{h})$, where
$\bm{\alpha}\in\R^d$. The affine mode has a learnable coordinate-wise scale
but no additive bias.

The next theorem shows that one extra coordinate lets RMSN approximate any
linear map on a compact feature set.

\begin{theorem}[Approximation of linear maps by RMSN]
\label{thm:lifted-rmsn-linearization}
Let $\mathcal H\subset\R^N$ be compact and let $\bm{W}^\star\in\R^{d_y\times N}$.  For every
$M>0$, there exists a width-$(N+1)$ map
 \begin{equation}
 f_M(\bm{h})
 =\bm{W}_2\operatorname{RMSN}
 (\bm{W}_1\bm{h}+\bm{b}_1)+\bm{b}_2
 \label{eq:rmsn-linear-network}
\end{equation}
such that
\begin{equation}
    \sup_{\bm{h}\in\mathcal H}
    \norm{f_M(\bm{h})-\bm{W}^\star\bm{h}}
    =O(M^{-2}).
    \label{eq:lifted-rmsn-linearization-rate}
\end{equation}
\end{theorem}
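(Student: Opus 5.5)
Construct the lift explicitly. Since $\mathcal H$ is compact, fix $B$ with $\norm{\bm h}\le B$ on $\mathcal H$. For $M>0$ define the width-$(N+1)$ first layer
\begin{equation*}
\bm{W}_1=\begin{pmatrix}\bm I_N\\ \zero^\top\end{pmatrix}\in\R^{(N+1)\times N},\qquad
\bm{b}_1=\begin{pmatrix}\zero\\ M\end{pmatrix},
\end{equation*}
so that $\bm z(\bm h)=\bm W_1\bm h+\bm b_1=(\bm h,M)$. For $M>0$ this vector is never zero, so the degenerate branch of RMSN never occurs and
\begin{equation*}
\operatorname{RMSN}(\bm z)=\frac{\sqrt{N+1}}{\sqrt{M^2+\norm{\bm h}^2}}\,(\bm h,M).
\end{equation*}
Writing $s(\bm h)=(1+\norm{\bm h}^2/M^2)^{-1/2}$, the first $N$ coordinates equal $\frac{\sqrt{N+1}}{M}s(\bm h)\bm h$ and the last equals $\sqrt{N+1}\,s(\bm h)$. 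This realizes the far-field picture of Figure~\ref{fig:ln-local-linearization}: the feature set is pushed to distance $M$ from the origin, so the scaling factor is nearly constant.

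Choose the readout to undo the input-independent rescaling: $\bm W_2=\frac{M}{\sqrt{N+1}}\,[\bm W^\star\;\;\zero]$ and $\bm b_2=\zero$. Then $f_M(\bm h)=s(\bm h)\bm W^\star\bm h$. (If the affine mode is used, take $\bm\alpha=\one$; the construction is identical.)

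The error estimate is elementary. For $t\ge0$ we have $0\le 1-(1+t)^{-1/2}\le t/2$, by concavity or the mean value theorem. Hence
\begin{equation*}
\norm{f_M(\bm h)-\bm W^\star\bm h}=(1-s(\bm h))\norm{\bm W^\star\bm h}\le \frac{\norm{\bm h}^2}{2M^2}\norm{\bm W^\star}\norm{\bm h}\le \frac{\norm{\bm W^\star}B^3}{2M^2},
\end{equation*}
uniformly on $\mathcal H$, which is $O(M^{-2})$ with a constant depending only on $\mathcal H$ and $\bm W^\star$.

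There is no real obstacle here. The only point needing care is that the readout weight scales like $M$, so one must track that the first-order $1/M$ factor cancels exactly. The error is then controlled by the second-order term of the scaling factor, which gives the $M^{-2}$ rate rather than $M^{-1}$.
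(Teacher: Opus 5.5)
Your proposal is correct and follows the paper's proof essentially verbatim: the same lift $\bm W_1=[\bm I_N;\zero^\top]$, $\bm b_1=[\zero;M]$, the same readout $\tfrac{M}{\sqrt{N+1}}[\bm W^\star\;\zero]$ with $\bm b_2=\zero$, and the same exact identity $f_M(\bm h)=(1+\norm{\bm h}^2/M^2)^{-1/2}\bm W^\star\bm h$. You also close it with the same elementary bound $0\le 1-(1+u)^{-1/2}\le u/2$, which gives the uniform $O(M^{-2})$ rate on $\mathcal H$.
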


\noindent
\begin{minipage}[t]{0.65\textwidth}
    \vspace{0pt}
    The parameters may depend on $M$.  They can be chosen with $\bm{W}_1$
    independent of $M$, $\norm{\bm{b}_1}_\infty=M$, and
    $\norm{\bm{W}_2}_{\max}=O(M)$, where
    $\norm{\bm{A}}_{\max}:=\max_{i,j}|A_{ij}|$.

    The exact identity and proof are given in
    Appendix~\ref{app:lifted-rmsn-proof}.  The construction augments the width by
    one, lifting each $\bm{h}\in\mathcal H$ to $[\bm{h}^\top,M]^\top$.  Because $\mathcal H$ is
    bounded, the augmented family moves away from the origin and the angle it
    subtends there tends to zero as $M\to\infty$.
    Figure~\ref{fig:ln-local-linearization} illustrates this far-field geometry.
    After an input-independent rescaling of
    order $M$, the resulting RMSN map approaches a linear map uniformly on $\mathcal H$.
    The appended coordinate receives zero readout weight, so the decoder recovers
    $\bm{W}^\star\bm{h}$.  Figure~\ref{fig:lift-linearize-decode} summarizes the
    construction.  This lift-linearize-decode mechanism provides the basic template
    for the constructions that follow, with additional reparameterizations used to
    accommodate the activation and, in the LN case, centering.
\end{minipage}\hfill
\begin{minipage}[t]{0.32\textwidth}
    \vspace{0.5pt}
    \centering
    \captionsetup{font=footnotesize,skip=2pt,hypcap=false}
    \includegraphics[width=\linewidth]{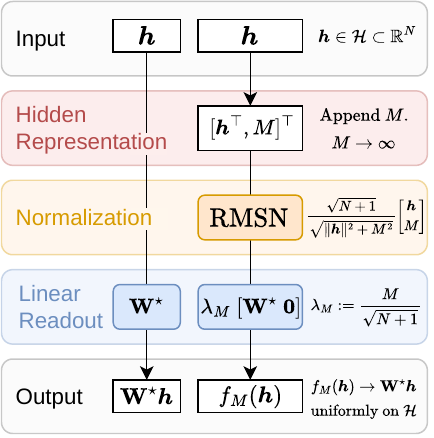}
    \captionof{figure}{Target map and its one-coordinate RMSNorm realization
    (Theorem~\ref{thm:lifted-rmsn-linearization}).}
    \label{fig:lift-linearize-decode}
\end{minipage}
\par\medskip

We next study whether inserting RMSNorm immediately before or after the
activation of a target one-hidden-layer MLP preserves its represented
function.

\section{Activation-normalized RMS networks}
\label{sec:an-rms}

The width-$N$ activation-normalized RMS class is
\begin{equation}
 \mathcal R_N^{\mathrm{AN},\diamond}(\phi)
 :=\left\{\bm{x}\mapsto
 \bm{W}_2\operatorname{RMSN}^{(\diamond)}
 \!\left(\phi(\bm{W}_1\bm{x}+\bm{b}_1)\right)+\bm{b}_2\right\}.
 \label{eq:rms-an-network}
\end{equation}
Here and below all displayed weights, biases, and available normalization
parameters vary freely.  The affine scale is absorbed into $\bm{W}_2$, so the
affine and no-affine AN classes coincide; we denote their common value by
$\mathcal R_N^{\mathrm{AN}}(\phi)$.

The common proof strategy is to augment the width until the original feature
map is recovered, up to a uniformly vanishing scaling distortion, from the first
$N$ normalized coordinates.  The added coordinates control the normalization
statistics and are ignored by the target readout, as illustrated in
Figure~\ref{fig:lift-linearize-decode}.

\subsection{Unbounded activations: magnitude controls the normalization scale}

\begin{theorem}[Approximation by RMS-AN networks with unbounded activations]
\label{thm:rms-an-unbounded}
Let $\phi$ be continuous with unbounded range.  For every
$f\in\mathcal M_N(\phi)$, there are constants $C_f,R_f>0$ such that every
attainable value $M\in\phi(\R)$ with $|M|\ge R_f$ admits
\begin{equation}
 \widehat f_M\in\mathcal R_{N+1}^{\mathrm{AN}}(\phi),
 \qquad
 \supnorm{\widehat f_M-f}\le \frac{C_f}{M^2}.
 \label{eq:rms-an-unbounded-rate}
\end{equation}
\end{theorem}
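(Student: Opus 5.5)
The plan mirrors the lift--linearize--decode template of Theorem~\ref{thm:lifted-rmsn-linearization}, with the appended coordinate generated by the activation itself. Fix the realization $f(\bm{x})=\bm{W}_2\phi(\bm{W}_1\bm{x}+\bm{b}_1)+\bm{b}_2$ and let $\bm{h}(\bm{x})=\phi(\bm{W}_1\bm{x}+\bm{b}_1)\in\R^N$. Since $K$ is compact and $\phi$ is continuous, $\mathcal H=\bm{h}(K)$ is compact, so $B:=\sup_{\bm{x}\in K}\norm{\bm{h}(\bm{x})}<\infty$.

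Given an attainable value $M=\phi(t_M)$, I build a width-$(N+1)$ network as follows.
\begin{itemize}
\item The first-layer weights are $\widetilde{\bm{W}}_1=[\bm{W}_1^\top,\zero]^\top$ and the biases are $\widetilde{\bm{b}}_1=[\bm{b}_1^\top,t_M]^\top$.
\item The last hidden unit is then constant, so the post-activation vector is $\widetilde{\bm{h}}(\bm{x})=[\bm{h}(\bm{x})^\top,M]^\top$.
\item This is exactly the lift used in Theorem~\ref{thm:lifted-rmsn-linearization}. The only difference is that the constant is produced by $\phi$, which is why $M$ must lie in $\phi(\R)$ and why unbounded range is needed to send $|M|\to\infty$.
\end{itemize}

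Whenever $|M|>0$, the lifted vector is nonzero and
\begin{equation*}
\operatorname{RMSN}(\widetilde{\bm{h}})=\frac{\sqrt{N+1}}{\sqrt{M^2+\norm{\bm{h}}^2}}\,\widetilde{\bm{h}}.
\end{equation*}
I choose the readout $\widetilde{\bm{W}}_2=\frac{|M|}{\sqrt{N+1}}[\bm{W}_2,\zero]$ and keep the bias $\bm{b}_2$. With this choice,
\begin{equation*}
\widehat f_M(\bm{x})=\frac{|M|}{\sqrt{M^2+\norm{\bm{h}(\bm{x})}^2}}\,\bm{W}_2\bm{h}(\bm{x})+\bm{b}_2 .
\end{equation*}
Since the affine scale is absorbed into $\widetilde{\bm{W}}_2$, this network lies in $\mathcal R_{N+1}^{\mathrm{AN}}(\phi)$ for both modes. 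Note that the factor uses $|M|$, so negative attainable values $M$ cause no sign issue.

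The error is then
\begin{equation*}
\widehat f_M-f=\Bigl(\bigl(1+\norm{\bm{h}}^2/M^2\bigr)^{-1/2}-1\Bigr)\bm{W}_2\bm{h}.
\end{equation*}
The elementary bound $0\le 1-(1+s)^{-1/2}\le s/2$ for $s\ge0$ gives
\begin{equation*}
\supnorm{\widehat f_M-f}\le \frac{\norm{\bm{W}_2}\,B^3}{2M^2}.
\end{equation*}
So I can take $C_f=\norm{\bm{W}_2}B^3/2$, or $C_f=1$ if this is zero. Any $R_f>0$ works, for instance $R_f=1$, since the only requirement is $M\ne0$ so that RMSN is in its nonzero branch.

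The only real subtlety is the attainability of a large constant coordinate through $\phi$. This is handled by choosing the bias $t_M$ with $\phi(t_M)=M$, which exists by hypothesis. No genuine obstacle remains beyond tracking the constant dependence on the fixed realization and on $K$.
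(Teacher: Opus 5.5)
Your proposal is correct and follows the paper's construction: append one constant activated coordinate $M=\phi(t_M)$ with zero readout weight, rescale the readout by $|M|/\sqrt{N+1}$ to get the exact factor $(1+\norm{\bm{h}}^2/M^2)^{-1/2}$, and finish with $0\le 1-(1+u)^{-1/2}\le u/2$. Your explicit constant $C_f=\norm{\bm{W}_2}B^3/2$ and your observation that any $R_f>0$ suffices spell out what the paper leaves implicit.
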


The construction appends the activated value $M$ and assigns its coordinate
zero readout weight.  After a fixed rescaling, the signal is multiplied by
$(1+\norm{\bm{z}(\bm{x})}^2/M^2)^{-1/2}$, which converges uniformly to one.
This is exactly the mechanism in Figure~\ref{fig:lift-linearize-decode}.

Driving the error in \Eqn\ref{eq:rms-an-unbounded-rate} to zero requires
attainable values $|M|\to\infty$, which is why this one-coordinate construction
needs an activation with unbounded range.  For bounded activations this regime
is unavailable; the next subsection replaces magnitude by multiplicity.

\subsection{General activations: multiplicity replaces magnitude}

\begin{theorem}[Approximation by RMS-AN networks with general activations]
\label{thm:rms-an-general}
Let $\phi$ be continuous and nonconstant.  For every
$f\in\mathcal M_N(\phi)$, there are constants $C_f>0$, $r_f\in\mathbb N$, and
a sequence $\widehat f_r\in\mathcal R_{N+r}^{\mathrm{AN}}(\phi)$ such that
\begin{equation}
 \supnorm{\widehat f_r-f}\le\frac{C_f}{r}
 \qquad(r\ge r_f).
 \label{eq:rms-an-general-rate}
\end{equation}
\end{theorem}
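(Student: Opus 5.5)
Since $\phi$ is continuous and nonconstant, fix a value $c=\phi(t_0)\neq 0$; one exists because a nonconstant $\phi$ cannot vanish identically. We append $r$ extra hidden neurons whose preactivations are the constant $t_0$. Their rows of $\bm{W}_1$ are zero and their bias entries equal $t_0$, so each outputs $c$ for every input. Writing $\bm{z}(\bm{x})=\phi(\bm{W}_1\bm{x}+\bm{b}_1)\in\R^N$ for the target features, the augmented activated vector is $[\bm{z}(\bm{x})^\top, c\one_r^\top]^\top\in\R^{N+r}$. Its squared norm is $\norm{\bm{z}(\bm{x})}^2+rc^2$. Hence the first $N$ coordinates of $\operatorname{RMSN}$ equal
\begin{equation*}
\sqrt{N+r}\,\frac{\bm{z}(\bm{x})}{\sqrt{rc^2+\norm{\bm{z}(\bm{x})}^2}}
=\frac{\sqrt{N+r}}{\sqrt r\,|c|}\,\bm{z}(\bm{x})\left(1+\frac{\norm{\bm{z}(\bm{x})}^2}{rc^2}\right)^{-1/2}.
\end{equation*}
The vector is never zero because $c\neq0$, so the $\bm{h}=\zero$ convention in the definition of RMSN never applies.

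The readout is then chosen as follows. Let $\widehat{\bm{W}}_2=\bigl[\tfrac{\sqrt r|c|}{\sqrt{N+r}}\bm{W}_2,\ \bm{0}_{d_y\times r}\bigr]$ and keep $\bm{b}_2$ unchanged. The extra coordinates get zero readout weight, exactly as in Theorem~\ref{thm:lifted-rmsn-linearization}. This gives
\begin{equation*}
\widehat f_r(\bm{x})-f(\bm{x})=\bm{W}_2\bm{z}(\bm{x})\left[\left(1+\frac{\norm{\bm{z}(\bm{x})}^2}{rc^2}\right)^{-1/2}-1\right].
\end{equation*}
Because $K$ is compact and $\phi$ is continuous, $B_f:=\sup_{\bm{x}\in K}\norm{\bm{z}(\bm{x})}<\infty$. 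The elementary bound $1-(1+u)^{-1/2}\le u/2$ for $u\ge0$ then yields
\begin{equation*}
\supnorm{\widehat f_r-f}\le \norm{\bm{W}_2}\,\frac{B_f^3}{2c^2 r}.
\end{equation*}
So we can take $C_f=\norm{\bm{W}_2}B_f^3/(2c^2)$ and $r_f=1$. The affine mode is already covered, since the AN classes coincide.

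There is no real obstacle in this argument. The one point to check is the choice of $c$: we need a single attainable nonzero activation value, and nonconstancy guarantees one. Multiplicity $r$ then plays the role that the magnitude $M$ plays in Theorem~\ref{thm:rms-an-unbounded}, since $rc^2$ replaces $M^2$. This is why the rate is $r^{-1}$ rather than $M^{-2}$.
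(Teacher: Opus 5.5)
Your proof is correct and is the paper's construction: append $r$ copies of a nonzero attainable value $c$ with zero readout weight, rescale the readout by $\sqrt r\,|c|/\sqrt{N+r}$, and control the residual factor $\left(1+\norm{\bm{z}(\bm{x})}^2/(rc^2)\right)^{-1/2}$ using $1-(1+u)^{-1/2}\le u/2$. Your explicit constant $C_f=\norm{\bm{W}_2}B_f^3/(2c^2)$ with $r_f=1$ simply makes the paper's $O_f(r^{-1})$ quantitative; if $B_f=0$ or $\bm{W}_2=\zero$, replace it by any positive constant, since the statement requires $C_f>0$.
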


Choose any nonzero attainable value $a$ and append $r$ copies.  Their squared
norm grows like $ra^2$, while the target feature vector remains bounded, so the
remaining scaling factor is $(1+\norm{\bm{z}(\bm{x})}^2/(ra^2))^{-1/2}$.
Appendix~\ref{app:rms-an-proofs} proves both theorems.

\paragraph{Summary.}
RMS-AN networks require only scale stabilization.  One large attainable value
gives width $N+1$, while $r$ repeated bounded values give width $N+r$.  The
next section moves normalization before the activation, where its scale can no
longer generally be absorbed into the readout.

\section{Preactivation-normalized RMS networks}
\label{sec:pn-rms}

Unlike in AN networks, the affine RMSNorm scale in PN networks acts before the
nonlinear activation and therefore cannot generally be absorbed into the
output linear map.  The affine and no-affine PN classes can therefore differ,
so we treat them separately.  The corresponding width-$N$ PN classes are
\begin{equation}
 \mathcal R_N^{\mathrm{PN},\diamond}(\phi)
 :=\left\{\bm{x}\mapsto
 \bm{W}_2\phi\!\left(\operatorname{RMSN}^{(\diamond)}
 (\bm{W}_1\bm{x}+\bm{b}_1)\right)+\bm{b}_2\right\}.
 \label{eq:rms-pn-network}
\end{equation}
\subsection{PN networks with affine parameters}

\begin{theorem}[Approximation by affine PN networks]
\label{thm:rms-affine-pre}
Let $\phi$ be continuous.  For every $f\in\mathcal M_N(\phi)$ and every
$\varepsilon>0$, there exists
$\widehat f\in\mathcal R_{N+1}^{\mathrm{PN,aff}}(\phi)$ such that
$\supnorm{\widehat f-f}<\varepsilon$.  If $\phi$ is Lipschitz on compact sets,
the approximants have error $O_f(M^{-2})$ as $M\to\infty$.
\end{theorem}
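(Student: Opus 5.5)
Our plan is to reuse the lift-linearize-decode template of Theorem~\ref{thm:lifted-rmsn-linearization}, now placing the lift before the normalization and undoing the rescaling through the affine scale $\bm\alpha$ rather than through the readout. Write the target as $f(\bm x)=\bm W_2\phi(\bm z(\bm x))+\bm b_2$ with $\bm z(\bm x)=\bm W_1\bm x+\bm b_1$. On the compact set $K$, $\bm z(K)$ is compact, so $B:=\sup_{\bm x\in K}\norm{\bm z(\bm x)}<\infty$. For $M>0$ we define the width-$(N+1)$ preactivation
\[
\widehat{\bm z}(\bm x)=\begin{bmatrix}\bm z(\bm x)\\ M\end{bmatrix}
=\begin{bmatrix}\bm W_1\\ \zero^\top\end{bmatrix}\bm x+\begin{bmatrix}\bm b_1\\ M\end{bmatrix}.
\]
Then $\widehat{\bm z}(\bm x)\neq\zero$ and
\[
\operatorname{RMSN}(\widehat{\bm z}(\bm x))=\sqrt{N+1}\,\frac{\widehat{\bm z}(\bm x)}{\sqrt{\norm{\bm z(\bm x)}^2+M^2}} .
\]
We take $\bm\alpha=\bigl(M/\sqrt{N+1}\bigr)\one_{N+1}$. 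The first $N$ normalized coordinates then equal
\[
\bm z(\bm x)\,s_M(\bm x),\qquad s_M(\bm x)=\bigl(1+\norm{\bm z(\bm x)}^2/M^2\bigr)^{-1/2}.
\]
The last coordinate is irrelevant, since we give it zero readout weight: the new readout is $[\bm W_2,\zero]$ with bias $\bm b_2$.

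The error at $\bm x$ is therefore
\[
\bm W_2\bigl(\phi(s_M(\bm x)\bm z(\bm x))-\phi(\bm z(\bm x))\bigr).
\]
We have $1\ge s_M\ge (1+B^2/M^2)^{-1/2}$, so $1-s_M\le B^2/(2M^2)$. Hence
\[
\norm{s_M\bm z-\bm z}\le B^3/(2M^2)
\]
uniformly on $K$, and both $s_M\bm z$ and $\bm z$ lie in the compact box $[-B,B]^N$. For merely continuous $\phi$, uniform continuity of $\phi$ on $[-B,B]$ gives a $\delta$ for the tolerance $\varepsilon/(\norm{\bm W_2}\sqrt N+1)$. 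Choosing $M$ with $B^3/(2M^2)<\delta$ then yields $\supnorm{\widehat f-f}<\varepsilon$. If $\phi$ is Lipschitz with constant $L_B$ on $[-B,B]$, the same estimate gives an error of at most $\norm{\bm W_2}\sqrt N\,L_B B^3/(2M^2)=O_f(M^{-2})$.

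The only delicate point is the scale bookkeeping. Unlike the AN case, we cannot push the factor $M/\sqrt{N+1}$ into the readout, because it must act before $\phi$. This is exactly where the affine parameter is needed, and it is why the no-affine case will require homogeneity or extra width. A second subtlety is ensuring that $s_M\bm z$ stays in a fixed compact set on which $\phi$ is uniformly continuous or Lipschitz. This holds since $s_M\le 1$, so $s_M\bm z$ lies in the same box $[-B,B]^N$ as $\bm z$. Finally, we note that the degenerate case $\widehat{\bm z}=\zero$ never arises because $M>0$.
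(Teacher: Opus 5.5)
Your proposal is correct and follows the paper's proof. Like the paper, you append the constant $M$, use the affine scale $\bm\alpha$ of size $M/\sqrt{N+1}$ to undo the normalization's shrinkage before $\phi$, give the extra coordinate zero readout weight, and conclude by uniform continuity, or by local Lipschitz continuity with $1-\lambda_M=O(M^{-2})$. The differences are cosmetic: the paper sets the appended coordinate's affine scale to $0$, which is immaterial given its zero readout weight, and the $\sqrt N$ factor in your Lipschitz bound is unnecessary but harmless.
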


The proof follows the construction of
Theorem~\ref{thm:lifted-rmsn-linearization}, with the affine RMSNorm scale
performing the required linear rescaling before $\phi$.  For each target
preactivation $\bm{t}$, append the constant $M$ as one extra coordinate and
choose the affine scale so that the first $N$ normalized coordinates are
$\left(1+\norm{\bm{t}}^2/M^2\right)^{-1/2}\bm{t}$.  Because $\bm{t}$ ranges
over a compact set, the scalar factor converges uniformly to one as
$|M|\to\infty$.  The continuity of $\phi$ therefore transfers this identity
limit through the activation.
Appendix~\ref{app:rms-pn-proofs} gives the proof.

\subsection{A finite-width obstruction without affine parameters}

\begin{proposition}[A finite-width obstruction]
\label{prop:fixed-width-obstruction}
For every finite normalized width $d$, let
$\psi_d(t)=\max\{0,t-\sqrt d\}$.  Every width-$d$ no-affine RMS-PN network with
activation $\psi_d$ is constant, whereas ordinary $\psi_d$-networks can
represent nonconstant functions.
\end{proposition}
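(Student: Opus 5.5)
The plan is to exploit the fact that no-affine RMSN places every output on a fixed sphere, so no single coordinate can exceed $\sqrt d$ in magnitude. Once that bound is in hand, the shifted ReLU $\psi_d$ kills every coordinate, and the network output collapses to its output bias.

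\emph{Step 1: bound the coordinates.} Fix a width-$d$ no-affine RMS-PN network
\[
\bm{x}\mapsto \bm{W}_2\psi_d\bigl(\operatorname{RMSN}(\bm{W}_1\bm{x}+\bm{b}_1)\bigr)+\bm{b}_2 .
\]
Set $\bm{u}=\operatorname{RMSN}(\bm{h})$ with $\bm{h}=\bm{W}_1\bm{x}+\bm{b}_1$. If $\bm{h}\ne\zero$, the definition in Section~\ref{subsec:rmsn-definition} gives $\norm{\bm{u}}^2=d$. Hence for each coordinate
\[
|u_i|\le\norm{\bm{u}}=\sqrt d .
\]
If $\bm{h}=\zero$, then $\bm{u}=\zero$, so again $|u_i|\le\sqrt d$.

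\emph{Step 2: the network is constant.} Since $u_i\le\sqrt d$, we have $\psi_d(u_i)=\max\{0,u_i-\sqrt d\}=0$ for every coordinate and every input $\bm{x}$. The network therefore equals $\bm{b}_2$ identically. It does not matter that equality $u_i=\sqrt d$ can occur, when $\bm{u}$ is a multiple of a standard basis vector, because $\psi_d$ vanishes there as well.

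\emph{Step 3: the ordinary network is nonconstant.} Take the width-one MLP $x\mapsto\psi_d(x)$ with $\bm{W}_1=1$, $\bm{b}_1=0$, $\bm{W}_2=1$ and $\bm{b}_2=0$. It equals $0$ at $x=0$ and equals $1$ at $x=\sqrt d+1$, so it is nonconstant on any compact set containing both points. This also gives a nonconstant element of $\mathcal M_N(\psi_d)$ for every $N\ge1$ and every $d_x$: use a first-layer row that picks out one input coordinate and pad the remaining neurons with zero readout weight.

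There is no real obstacle here. The only point requiring care is the convention $\operatorname{RMSN}(\zero)=\zero$, which is handled in Step 1. The point worth noting is that $\psi_d$ depends on $d$: the shift equals the sphere radius. So the statement does not say that any fixed activation fails at every width. It says that for each width there is an activation that defeats it, which is what justifies requiring affine parameters or homogeneity in Table~\ref{tab:result-map}.
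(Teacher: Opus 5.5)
Your proof is correct and matches the paper's argument. The sphere constraint, together with the convention $\operatorname{RMSN}(\zero)=\zero$, gives $|u_i|\le\sqrt d$, so $\psi_d$ annihilates every coordinate and the output reduces to $\bm{b}_2$, while an ordinary $\psi_d$-network is nonconstant. The paper shows the last part slightly differently, by absorbing the shift into the first-layer bias, $\psi_d(\bm{w}^\top\bm{x}+c)=\operatorname{ReLU}(\bm{w}^\top\bm{x}+c-\sqrt d)$; this gives nonconstancy on any prescribed compact $K$ with at least two points (e.g.\ $[-1,1]$), not only on sets that reach $\sqrt d+1$.
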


Every coordinate of an RMSNorm output lies in $[-\sqrt d,\sqrt d]$, where
$\psi_d$ vanishes.  Thus no fixed $d$ gives a guarantee uniform over all
continuous activations: at finite width, normalization can erase the
activation's nonconstant regime and hence its expressive power.  The next two
subsections give two distinct ways around this obstruction.  Positive
homogeneity makes an inverse rescaling of the second-layer weights equivalent
to applying the required scale before $\phi$, retaining width $N+1$;
for a general continuous activation, increasing width recovers approximation
asymptotically.

\subsection{No-affine PN networks with positively homogeneous activations}

\begin{theorem}[Approximation by no-affine PN networks with homogeneous activations]
\label{thm:rms-homogeneous-pre}
Suppose that $\phi$ is continuous and positively homogeneous of degree one,
namely, $\phi(\lambda t)=\lambda\phi(t)$ for all $\lambda\ge0$ and
$t\in\R$.
For every $f\in\mathcal M_N(\phi)$ and every $\varepsilon>0$, there exists
$\widehat f\in\mathcal R_{N+1}^{\mathrm{PN,0}}(\phi)$ such that
$\supnorm{\widehat f-f}<\varepsilon$.  The approximants have error
$O_f(M^{-2})$ as $M\to\infty$.
\end{theorem}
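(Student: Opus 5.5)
The idea is to mimic the construction behind Theorem~\ref{thm:rms-affine-pre}, with the affine scale replaced by positive homogeneity. Write the target as $f(\bm{x})=\bm{W}_2\phi(\bm{t}(\bm{x}))+\bm{b}_2$, where $\bm{t}(\bm{x})=\bm{W}_1\bm{x}+\bm{b}_1\in\R^N$. Fix $M>0$ and define the width-$(N+1)$ preactivation
\[
\bm{s}(\bm{x})=\begin{bmatrix}\bm{t}(\bm{x})\\ M\end{bmatrix}
=\begin{bmatrix}\bm{W}_1\\ \zero^\top\end{bmatrix}\bm{x}+\begin{bmatrix}\bm{b}_1\\ M\end{bmatrix}.
\]
Since $M\neq0$, we have $\bm{s}\neq\zero$, so
\[
\operatorname{RMSN}(\bm{s})=\frac{\sqrt{N+1}}{\sqrt{\norm{\bm{t}}^2+M^2}}\,\bm{s}
=\frac{\sqrt{N+1}}{M}\,\rho_M(\bm{x})\,\bm{s},
\qquad
\rho_M(\bm{x})=\left(1+\norm{\bm{t}(\bm{x})}^2/M^2\right)^{-1/2}\in(0,1].
\]
Because the scalar $\frac{\sqrt{N+1}}{M}\rho_M$ is nonnegative, homogeneity of degree one lets it pass through $\phi$ coordinatewise. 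For the first $N$ coordinates this gives
\[
\phi\bigl(\operatorname{RMSN}(\bm{s})\bigr)_{1:N}=\frac{\sqrt{N+1}}{M}\,\rho_M\,\phi(\bm{t}).
\]

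The readout then undoes the constant rescaling. Set the readout weight to $\widehat{\bm{W}}_2=\frac{M}{\sqrt{N+1}}[\bm{W}_2,\ \zero]$, so the appended coordinate is discarded, and keep the bias $\bm{b}_2$. The network becomes
\[
\widehat f_M(\bm{x})=\rho_M(\bm{x})\,\bm{W}_2\phi(\bm{t}(\bm{x}))+\bm{b}_2.
\]
This is exactly the identity from Theorem~\ref{thm:lifted-rmsn-linearization}, now passed through $\phi$. The only use of homogeneity is to move the input-independent factor $M^{-1}\sqrt{N+1}$ and the input-dependent factor $\rho_M$ out of $\phi$. This works because both are positive scalars shared by all coordinates. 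It is also why degree one, rather than just some positive homogeneity, is needed for the readout to absorb the constant factor.

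For the error bound, let $T=\sup_K\norm{\bm{t}}$ and $B=\sup_K\norm{\bm{W}_2\phi(\bm{t})}$. Both are finite because $K$ is compact and $\phi$ is continuous. Then
\[
\norm{\widehat f_M-f}_{\infty,K}\le B\sup_K\bigl(1-\rho_M\bigr)\le B\,\frac{T^2}{2M^2},
\]
using $1-(1+u)^{-1/2}\le u/2$ for $u\ge0$. This gives the rate $O_f(M^{-2})$, and choosing $M$ large enough gives error below $\varepsilon$. We need no Lipschitz assumption on $\phi$, because the distortion appears as a multiplicative factor outside $\phi$ rather than as a perturbation of its argument.

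The main thing to get right is the sign bookkeeping. Homogeneity holds only for $\lambda\ge0$, so we must take $M>0$. We must also confirm that the RMSN scaling factor is a single positive scalar applied to all coordinates. That is immediate here, since the no-affine RMSN has no coordinatewise parameters. The degenerate case $\bm{s}=\zero$ cannot occur. The remaining parameter-size remarks ($\norm{\widehat{\bm{W}}_2}_{\max}=O(M)$, and the first-layer weight independent of $M$) follow directly from the construction.
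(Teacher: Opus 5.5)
Your proposal is correct and is essentially the paper's proof. You append the constant coordinate $M$, use positive homogeneity to pull the common positive factor $c_M\rho_M(\bm{x})$ (with $c_M=\sqrt{N+1}/M$) through $\phi$, and cancel $c_M$ in the readout; this gives exactly the identity $\widehat f_M=\rho_M\bm{W}_2\phi(\bm{t})+\bm{b}_2$ that the paper writes out in its LN counterpart, so no Lipschitz argument is needed.

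Two side remarks are overstated, though neither affects the argument. First, the sign of $M$ is irrelevant, because the RMSN factor $\sqrt{N+1}/\norm{\bm{s}}$ is positive for either sign (the paper uses $|M|$). Second, degree-$k$ homogeneity with any $k>0$ would equally let the readout absorb $c_M^{k}$, leaving a factor $\rho_M^{k}\to1$, so degree one is not what makes the absorption work.
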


Without affine parameters, normalization leaves a positive common factor before
$\phi$, so no homogeneity condition for negative multipliers is needed.  Positive
homogeneity moves this factor through the activation, and an inverse rescaling
of the second-layer weights cancels it.  Equivalently, the output-layer
rescaling supplies the scale that affine RMSNorm would have applied before $\phi$.

\subsection{No-affine PN networks with general continuous activations}

\begin{theorem}[Approximation by no-affine PN networks with general activations]
\label{thm:rms-plain-pre-general}
Let $\phi$ be continuous.  For every $f\in\mathcal M_N(\phi)$, there is a
sequence $\widehat f_r\in\mathcal R_{N+r}^{\mathrm{PN,0}}(\phi)$ such that
$\supnorm{\widehat f_r-f}\to0$.  If $\phi$ is Lipschitz on compact sets, then
for some $C_f>0$ and $r_f\in\mathbb N$,
\begin{equation}
 \supnorm{\widehat f_r-f}\le\frac{C_f}{r}
 \qquad(r\ge r_f).
 \label{eq:rms-plain-pre-rate}
\end{equation}
\end{theorem}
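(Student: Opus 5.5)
The plan is to reuse the lift-linearize-decode template of Theorem~\ref{thm:lifted-rmsn-linearization}. The rescaling that affine RMSNorm would perform in Theorem~\ref{thm:rms-affine-pre} is not available here, so I will put it into the first-layer weights and let multiplicity of the appended constant make the residual scaling factor tend to one. Fix the chosen width-$N$ realization $f(\bm{x})=\bm{W}_2\phi(\bm{t}(\bm{x}))+\bm{b}_2$ with $\bm{t}(\bm{x})=\bm{W}_1\bm{x}+\bm{b}_1$. Let $T=\bm{t}(K)\subset\R^N$, which is compact, and put $B=\max_{\bm{t}\in T}\norm{\bm{t}}$.

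For width $N+r$, take the preactivation
\[
\bm{h}(\bm{x})=\bigl[\lambda_r\bm{t}(\bm{x})^\top,\ \one_r^\top\bigr]^\top,
\qquad
\lambda_r=\sqrt{r/(N+r)}.
\]
This is realized by first-layer weight $[\lambda_r\bm{W}_1^\top,\ \zero]^\top$ and bias $[\lambda_r\bm{b}_1^\top,\ \one_r^\top]^\top$. Since the last $r$ coordinates are nonzero, $\bm{h}\neq\zero$ and the RMSN formula applies everywhere. A direct computation gives the first $N$ normalized coordinates as
\[
\frac{\sqrt{N+r}\,\lambda_r\,\bm{t}}{\sqrt{\lambda_r^2\norm{\bm{t}}^2+r}}
=\Bigl(1+\frac{\norm{\bm{t}}^2}{N+r}\Bigr)^{-1/2}\bm{t}
=:s_r(\bm{t})\,\bm{t}.
\]
The readout is $\widehat f_r(\bm{x})=[\bm{W}_2,\ \zero]\,\phi(\operatorname{RMSN}(\bm{h}(\bm{x})))+\bm{b}_2$. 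The appended coordinates get zero readout weight, so their bounded normalized values are irrelevant.

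The error is then $\bm{W}_2\bigl(\phi(s_r(\bm{t})\bm{t})-\phi(\bm{t})\bigr)$, evaluated for $\bm{t}\in T$. Since $s_r\in(0,1]$, every argument $s_r(\bm{t})\bm{t}$ lies in the compact set $\{\mu\bm{t}:\mu\in[0,1],\ \bm{t}\in T\}$, which is independent of $r$. Also $\norm{s_r(\bm{t})\bm{t}-\bm{t}}\le(1-s_r(\bm{t}))B$, and $1-s_r(\bm{t})\le \norm{\bm{t}}^2/(2(N+r))\le B^2/(2r)$. Uniform continuity of $\phi$ on this compact set gives $\supnorm{\widehat f_r-f}\to0$. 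If $\phi$ is Lipschitz with constant $L$ there, the error is at most $\norm{\bm{W}_2}\,L\sqrt N\,B^3/(2r)$, which is \Eqn\ref{eq:rms-plain-pre-rate} with $r_f=1$.

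The main obstacle I expect is precisely why plain width $N+1$ fails, and the argument must avoid it. With a single appended constant $a$, matching the leading scale forces $\sqrt{N+1}\,\lambda=|a|$. The residual factor then becomes $(1+\norm{\bm{t}}^2/(N+1))^{-1/2}$, which is fixed and does not improve. This is consistent with Proposition~\ref{prop:fixed-width-obstruction}. The key step is therefore the normalization of $\lambda_r$: choosing $\lambda_r$ to cancel the $\sqrt{N+r}$ prefactor leaves the factor $(1+\norm{\bm{t}}^2/(N+r))^{-1/2}$. I would verify this algebra carefully, together with the observation that all arguments of $\phi$ stay in a fixed compact set.
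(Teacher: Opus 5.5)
Your proposal is correct and follows the paper's proof: you append $r$ constant unit coordinates with zero readout weight, show that the scalar factor on the first $N$ normalized coordinates is $1+O(r^{-1})$ uniformly on $K$, and pass this through $\phi$ by uniform continuity or local Lipschitz continuity. The only difference is your first-layer rescaling by $\lambda_r=\sqrt{r/(N+r)}$, which the paper omits; its factor $\bigl((N+r)/(\norm{\bm{t}}^2+r)\bigr)^{1/2}$ already tends to one, so multiplicity, not the choice of $\lambda_r$, is the real mechanism, and your rescaling merely keeps the factor in $(0,1]$, which gives a fixed compact domain for $\phi$ and the bound for all $r\ge1$.
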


For the target preactivation $\bm{t}(\bm{x})$, append $r$ constant coordinates
equal to one, forming
$[\bm{t}(\bm{x})^\top,\one_r^\top]^\top\in\R^{N+r}$.  By the definition of
RMSNorm, its first $N$ normalized coordinates are
$\left((N+r)/(\norm{\bm{t}(\bm{x})}^2+r)\right)^{1/2}\bm{t}(\bm{x})$.  On the
compact input set, the scalar factor converges uniformly to one, so the
normalized preactivation converges to the original one.  The networks exist
for every $r$, but the displayed error bound is asserted only for $r\ge r_f$.
Appendix~\ref{app:rms-pn-proofs} gives the proofs of all results in this
section.

\paragraph{Summary.}
Affine parameters give an $N+1$ RMS-PN approximation for every continuous
activation, and positive homogeneity retains width $N+1$ without affine
parameters.  General
continuous activations instead require replicated-coordinate width growth.
The next section adds centering and derives the corresponding LayerNorm
guarantees.

\section{Adding centering: From RMSNorm to LayerNorm}
\label{sec:layernorm-analysis}

LayerNorm~\citep{ba2016layer} differs from RMSNorm by an additional centering
operation: before normalization by scale, it subtracts the coordinate mean.  Let
$\bm{P}_d=\bm{I}_d-d^{-1}\one_d\one_d^\top$.  With this notation, LN is
\begin{equation}
 \operatorname{LN}(\bm{h})
 =\operatorname{RMSN}(\bm{P}_d\bm{h})
 =\frac{\bm{h}-\mu(\bm{h})\one_d}{\sigma(\bm{h})},
 \label{eq:exact-ln}
\end{equation}
where $\mu(\bm{h})=d^{-1}\one_d^\top\bm{h}$ and
$\sigma(\bm{h})=(d^{-1}\norm{\bm{h}-\mu(\bm{h})\one_d}^2)^{1/2}$.
We set $\operatorname{LN}(\bm{h})=\zero$ when $\bm{h}$ has zero variance. Its
affine mode is
$\operatorname{LN}_{\bm{\alpha},\bm{\beta}}(\bm{h})
=\bm{\alpha}\odot\operatorname{LN}(\bm{h})+\bm{\beta}$ for
$\bm{\alpha},\bm{\beta}\in\R^d$. For $\diamond\in\{0,\mathrm{aff}\}$, let
$\operatorname{LN}^{(0)}=\operatorname{LN}$ and
$\operatorname{LN}^{(\mathrm{aff})}
=\operatorname{LN}_{\bm{\alpha},\bm{\beta}}$.

For $\diamond\in\{0,\mathrm{aff}\}$, define the LN classes
\begin{align}
 \mathcal P_N^{\mathrm{AN},\diamond}(\phi)
 &:={}
 \left\{\bm{x}\mapsto
 \bm{W}_2\operatorname{LN}^{(\diamond)}
 \!\left(\phi(\bm{W}_1\bm{x}+\bm{b}_1)\right)+\bm{b}_2\right\},
 \label{eq:post-ln-network}\\
 \mathcal P_N^{\mathrm{PN},\diamond}(\phi)
 &:={}
 \left\{\bm{x}\mapsto
 \bm{W}_2\phi\!\left(\operatorname{LN}^{(\diamond)}
 (\bm{W}_1\bm{x}+\bm{b}_1)\right)+\bm{b}_2\right\},
 \label{eq:pre-ln-network}
\end{align}
where $\operatorname{LN}^{(0)}=\operatorname{LN}$ and
$\operatorname{LN}^{(\mathrm{aff})}=\operatorname{LN}_{\bm{\alpha},\bm{\beta}}$.
For AN networks the affine parameters are absorbed into the readout, so
$\mathcal P_N^{\mathrm{AN,aff}}(\phi)=\mathcal P_N^{\mathrm{AN,0}}(\phi)$;
we denote the common class by $\mathcal P_N^{\mathrm{AN}}(\phi)$.

The width-augmentation principle is the same as for RMSNorm: the added
coordinates make the normalization scale nearly constant on the bounded signal
family.  The new issue is LN's centering projection $\bm{P}_d$, which we absorb
through reparameterization rather than by approximating the subtracted mean.
Theorem~\ref{thm:lifted-rmsn-linearization} extends to LN by replacing its
one-coordinate augmentation with the following mean-zero two-coordinate
construction.  Write
$s(\bm{h})=\one_N^\top\bm{h}$ and replace the one-coordinate augmentation by
$\bm{y}_M(\bm{h})=[\bm{h}^\top,-s(\bm{h})/2+M,-s(\bm{h})/2-M]^\top$.
Since $\one_{N+2}^\top\bm{y}_M(\bm{h})=0$, LN equals RMSNorm on the augmented
vector; the same decoder gives width $N+2$ and error $O(M^{-2})$.
Appendix~\ref{app:lifted-ln-proof} gives the exact calculation.

\subsection{Centering through reparameterization}

The centering operation can be absorbed on the side of the activation where
the relevant map remains linear.  For PN networks, the mean-zero
two-coordinate construction above absorbs centering before the activation and
retains the $O(M^{-2})$ rate; see Appendix~\ref{app:affine-pre-proof}.

For AN networks, this input-side argument is unavailable because generally
$\phi(\bm{P}_d\bm{h})\ne\bm{P}_d\phi(\bm{h})$.  Moreover, an activation such as
ReLU may be unbounded only on one side, so the mean-zero pair $M,-M$ used above
need not be attainable.  The following result shows how one attainable large
value still suffices.

\begin{theorem}[Approximation by LN-AN networks with unbounded activations]
\label{thm:ln-an-unbounded}
Let $\phi$ be continuous with unbounded range.  For every
$f\in\mathcal M_N(\phi)$, there are constants $C_f,R_f>0$ such that every
attainable value $M\in\phi(\R)$ with $|M|\ge R_f$ admits
\begin{equation}
 \widehat f_M\in\mathcal P_{N+2}^{\mathrm{AN}}(\phi),
 \qquad
 \supnorm{\widehat f_M-f}\le \frac{C_f}{|M|}.
 \label{eq:ln-an-unbounded-rate}
\end{equation}
\end{theorem}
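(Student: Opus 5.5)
The plan is to lift the hidden feature vector by two constant coordinates: one carrying the large attainable value $M$ and one carrying a fixed bounded value $a$. The readout is chosen to annihilate $\one_{N+2}$, so that LN's centering becomes invisible. The input-dependent scale is then compensated by a fixed readout factor proportional to $|M|$. This is the AN analogue of the mean-zero two-coordinate construction, except that the zero-sum condition is placed on the readout rather than on the features. That matters because $-M$ need not be attainable.

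\textbf{Setup.} Fix a realization $f(\bm{x})=\bm{W}_2\phi(\bm{W}_1\bm{x}+\bm{b}_1)+\bm{b}_2$ and set $\bm{z}(\bm{x})=\phi(\bm{W}_1\bm{x}+\bm{b}_1)$. By continuity, $\sup_K\norm{\bm{z}}\le B_f$ and $\sup_K|\one_N^\top\bm{z}|\le\sqrt N B_f$.

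\textbf{Extra coordinates.} Fix any attainable $a=\phi(c_a)$. Given attainable $M=\phi(c_M)$, append two hidden neurons with zero input weights and biases $c_M$ and $c_a$. The activated vector is then
\[
\bm{v}(\bm{x})=[\bm{z}(\bm{x})^\top,M,a]^\top\in\R^{N+2}.
\]

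\textbf{Readout.} Take $\bm{B}=[\bm{W}_2,\ \zero,\ -\bm{W}_2\one_N]\in\R^{d_y\times(N+2)}$. Then $\bm{B}\one_{N+2}=\zero$, so
\[
\bm{B}\bigl(\bm{v}-\mu(\bm{v})\one_{N+2}\bigr)=\bm{B}\bm{v}=\bm{W}_2\bm{z}-a\bm{W}_2\one_N.
\]
Hence
\[
\bm{B}\operatorname{LN}(\bm{v})=\frac{\bm{W}_2\bm{z}-a\bm{W}_2\one_N}{\sigma(\bm{v})}.
\]
Define
\[
\widehat f_M(\bm{x})=\lambda_M\bm{B}\operatorname{LN}(\bm{v}(\bm{x}))+\bm{b}_2+a\bm{W}_2\one_N,
\qquad
\lambda_M=\frac{|M|\sqrt{N+1}}{N+2}.
\]
The affine LN parameters are not needed, since the AN classes coincide.

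\textbf{Error.} Subtracting $f$ gives
\[
\widehat f_M-f=\Bigl(\frac{\lambda_M}{\sigma(\bm{v})}-1\Bigr)\bigl(\bm{W}_2\bm{z}-a\bm{W}_2\one_N\bigr).
\]
The second factor is bounded on $K$ by a constant depending only on $f$ and $a$. It therefore suffices to show $\sup_K|\lambda_M/\sigma(\bm{v})-1|\le C/|M|$ for $|M|\ge R_f$.

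\textbf{Main step: expanding the variance.} Write
\[
(N+2)\sigma^2=\norm{\bm{v}}^2-(N+2)\mu^2,\qquad
\norm{\bm{v}}^2=M^2+O(1),
\]
\[
(N+2)\mu^2=\frac{(M+a+\one_N^\top\bm{z})^2}{N+2}=\frac{M^2}{N+2}+O(|M|),
\]
with the $O(\cdot)$ uniform over $K$. This gives
\[
\sigma^2=\lambda_M^2+E,\qquad |E|\le C_1|M|.
\]
So for $|M|\ge R_f$ (chosen so that $|E|\le\lambda_M^2/2$), $\sigma$ is bounded away from zero, and the zero-variance convention never triggers. Moreover,
\[
\Bigl|\frac{\lambda_M}{\sigma}-1\Bigr|=\frac{|\lambda_M-\sigma|}{\sigma}=\frac{|E|}{\sigma(\lambda_M+\sigma)}\le\frac{C_2|M|}{M^2}=\frac{C_2}{|M|}.
\]
This produces the claimed $C_f/|M|$ rate.

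The main obstacle is exactly this step. The cross term $2M(a+\one_N^\top\bm{z})/(N+2)$ in $\mu^2$ is of order $|M|$ and depends on the input, so the error is only $O(1/|M|)$, not $O(M^{-2})$ as in the RMSNorm case. I would check carefully that all constants are uniform in $\bm{x}\in K$ and independent of the sign of $M$.
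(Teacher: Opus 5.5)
Your proposal is correct and is essentially the paper's proof. The paper first passes to an exactly equivalent width-$(N+1)$ MLP with zero-sum readout by appending a constant feature $\phi(c)$ (your $a$) with readout weight $-\bm{W}_2\one_N$ and output bias $\bm{b}_2+\phi(c)\bm{W}_2\one_N$. It then appends $M$ with zero readout weight and rescales by $\tau_M=|M|\sqrt{N+1}/(N+2)$, which is exactly your $\lambda_M$. You merge these two lemmas into one step, and the resulting network coincides with the paper's up to a permutation of coordinates. You also bound $\lambda_M/\sigma-1$ by a difference of squares, where the paper uses the Lipschitz continuity of $u\mapsto u^{-1/2}$ near $1$.
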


\paragraph{Proof analysis.}
Let the target be the width-$n$ MLP
$g(\bm{x})=\bm{A}\phi(\bm{B}\bm{x}+\bm{b})+\bm{a}
=\bm{A}\bm{z}(\bm{x})+\bm{a}$.  We first append a single attainable value $M$
with zero readout weight.  For clarity take $M>0$; the negative case has the
same calculation with the corresponding signs.  Put $d=n+1$,
$s=\one_n^\top\bm{z}$, $\mu_M=(s+M)/d$,
$\sigma_M=\sigma([\bm{z}^\top,M]^\top)$,
$\tau_M=M\sqrt n/d$, and $\lambda_M=\tau_M/\sigma_M$.  The candidate network is
\begin{equation}
 \widehat g_M(\bm{x})
 =\tau_M\begin{bmatrix}\bm{A}&\zero\end{bmatrix}
   \operatorname{LN}\!\left(
   \begin{bmatrix}\bm{z}(\bm{x})\\M\end{bmatrix}\right)
   +\bm{a}+\frac{M}{d}\bm{A}\one_n.
 \label{eq:ln-an-one-coordinate-candidate}
\end{equation}
A direct calculation gives the approximation error
\begin{equation}
 \widehat g_M-g
 =(\lambda_M-1)\bm{A}\bm{z}
   -\left(\lambda_M\mu_M-\frac{M}{d}\right)\bm{A}\one_n.
 \label{eq:ln-an-one-coordinate-error}
\end{equation}
The last term in \Eqn\ref{eq:ln-an-one-coordinate-candidate} cancels the leading
input-independent centering error.  Uniformly on the compact input set,
$\lambda_M=1+s/(nM)+O(M^{-2})$ and
$\lambda_M\mu_M-M/d=s/n+O(M^{-1})$.  Thus the remaining error generally tends
to $-(s(\bm{x})/n)\bm{A}\one_n$.  The one-coordinate construction succeeds
precisely under the zero-sum condition $\bm{A}\one_n=\zero$.

\begin{lemma}[Conditional one-coordinate approximation]
\label{lem:ln-an-zero-sum-conditional}
Let $\phi$ be continuous with unbounded range, and let
$g(\bm{x})=\bm{A}\phi(\bm{B}\bm{x}+\bm{b})+\bm{a}$ be a width-$n$ MLP with
$\bm{A}\one_n=\zero$.  There are constants $C_g,R_g>0$ such that every
attainable $M\in\phi(\R)$ with $|M|\ge R_g$ admits
$\widehat g_M\in\mathcal P_{n+1}^{\mathrm{AN}}(\phi)$ with
$\sup_{\bm{x}\in K}\norm{\widehat g_M(\bm{x})-g(\bm{x})}\le C_g/|M|$.
\end{lemma}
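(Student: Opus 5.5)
The plan is to realize the candidate network \Eqn\ref{eq:ln-an-one-coordinate-candidate} inside $\mathcal P_{n+1}^{\mathrm{AN}}(\phi)$ and then bound the exact error identity \Eqn\ref{eq:ln-an-one-coordinate-error} under the hypothesis $\bm{A}\one_n=\zero$.

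\textbf{Membership in the class.} Since $M\in\phi(\R)$, pick $c$ with $\phi(c)=M$. The first layer is then $\begin{bmatrix}\bm{B}\\ \zero^\top\end{bmatrix}$ with bias $\begin{bmatrix}\bm{b}\\ c\end{bmatrix}$, so its activated output is $[\bm{z}(\bm{x})^\top,M]^\top$. The readout $\tau_M[\bm{A}\ \zero]$ and the constant bias are ordinary parameters. Hence $\widehat g_M\in\mathcal P_{n+1}^{\mathrm{AN}}(\phi)$.

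\textbf{Error identity.} I will verify \Eqn\ref{eq:ln-an-one-coordinate-error} directly. The first $n$ coordinates of $\operatorname{LN}$ equal $(\bm{z}-\mu_M\one_n)/\sigma_M$, and multiplying by $\tau_M$ turns this into $\lambda_M(\bm{z}-\mu_M\one_n)$. Under $\bm{A}\one_n=\zero$, both the centering term and the added bias $(M/d)\bm{A}\one_n$ drop out, leaving
\[
\widehat g_M-g=(\lambda_M-1)\bm{A}\bm{z}.
\]
It therefore suffices to show $\sup_K|\lambda_M-1|=O(1/|M|)$, because $\bm{z}$ is bounded on $K$ by continuity of $\phi$.

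\textbf{Estimating $\sigma_M$.} This is the main quantitative step. Let $Z=\sup_K\norm{\bm{z}}$. Compute $d\,\sigma_M^2=\norm{\bm{z}}^2+M^2-(s+M)^2/d$. The leading part is $M^2(1-1/d)=M^2n/d$, and the remainder is $\norm{\bm{z}}^2-2sM/d-s^2/d$, which is bounded in absolute value by $C(1+|M|)$ with $C$ depending only on $Z$ and $n$. Thus
\[
\sigma_M^2=\frac{M^2 n}{d^2}\bigl(1+\rho_M\bigr),\qquad |\rho_M|\le C'/|M|.
\]
This gives $\lambda_M=\tau_M/\sigma_M=(1+\rho_M)^{-1/2}$; the sign convention for $\tau_M$ is $|M|\sqrt n/d$ when $M<0$, handled symmetrically. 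Choose $R_g$ so that $C'/R_g\le1/2$. Then $\sigma_M>0$, so $\operatorname{LN}$ is in its nondegenerate branch, and $|(1+\rho)^{-1/2}-1|\le c|\rho|$ for $|\rho|\le1/2$. This yields $|\lambda_M-1|\le cC'/|M|$.

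\textbf{Conclusion.} Combining these bounds, $\sup_K\norm{\widehat g_M-g}\le \norm{\bm{A}}\,Z\,cC'/|M|=:C_g/|M|$ for every attainable $M$ with $|M|\ge R_g$. The only delicate points are keeping the constants uniform over $K$ and $M$, and checking the nondegeneracy $\sigma_M\ne0$; both follow from the bound on $\rho_M$ above.
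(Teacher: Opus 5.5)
Your proposal is correct and follows essentially the same route as the paper. You realize the appended coordinate via $\phi(c)=M$, use $\bm{A}\one_n=\zero$ to cancel LN's centering so that $\widehat g_M-g=(\tau_M/\sigma_M-1)\bm{A}\bm{z}$, and then bound $\sigma_M^2/\tau_M^2-1=O(1/|M|)$ uniformly on $K$ together with a Lipschitz estimate for $u^{-1/2}$ near $1$. The differences are cosmetic: your extra bias $(M/d)\bm{A}\one_n$ vanishes under the hypothesis, and your bound $|\rho_M|\le C'/|M|$ implicitly assumes $|M|\ge1$, which you can build into the choice of $R_g$.
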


The complete proof is given in Appendix~\ref{app:post-unbounded-proof}.
Lemma~\ref{lem:ln-an-zero-sum-conditional} thus approximates a zero-sum target
after appending only one coordinate.  It remains to put an arbitrary target MLP
in this favorable form; one additional constant activated coordinate suffices.

\begin{lemma}[Zero-sum readout reparameterization]
\label{lem:post-zero-sum}
Every $f\in\mathcal M_N(\phi)$ has an exactly equivalent width-$(N+1)$
one-hidden-layer MLP
$\bar f(\bm{x})=\bar{\bm{W}}_2
\phi(\bar{\bm{W}}_1\bm{x}+\bar{\bm{b}}_1)+\bar{\bm{b}}_2$
with $\bar{\bm{W}}_2\one_{N+1}=\zero$.
\end{lemma}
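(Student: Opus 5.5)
We add one hidden neuron that is constant in $\bm{x}$ and use its readout column to cancel the row sums of $\bm{W}_2$, correcting the output with the bias. This only requires $\phi$ to take a nonzero value somewhere. The case where $\phi$ is identically zero is handled separately.

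\textbf{Case $\phi\not\equiv 0$.} Fix $c\in\R$ with $a:=\phi(c)\ne0$. Define
\[
\bar{\bm{W}}_1=\begin{bmatrix}\bm{W}_1\\ \zero^\top\end{bmatrix},\qquad
\bar{\bm{b}}_1=\begin{bmatrix}\bm{b}_1\\ c\end{bmatrix}.
\]
The hidden vector is then $[\phi(\bm{W}_1\bm{x}+\bm{b}_1)^\top,a]^\top$, and its last coordinate equals $a$ for every $\bm{x}$. Set
\[
\bm{v}=-\bm{W}_2\one_N/a,\qquad
\bar{\bm{W}}_2=\begin{bmatrix}\bm{W}_2&\bm{v}\end{bmatrix},\qquad
\bar{\bm{b}}_2=\bm{b}_2-a\bm{v}=\bm{b}_2+\bm{W}_2\one_N.
\]
Then
\[
\bar{\bm{W}}_2\one_{N+1}=\bm{W}_2\one_N+\bm{v}=\zero.
\]
For every $\bm{x}$,
\[
\bar f(\bm{x})=\bm{W}_2\phi(\bm{W}_1\bm{x}+\bm{b}_1)+a\bm{v}+\bm{b}_2-a\bm{v}=f(\bm{x}),
\]
so the equivalence is exact on all of $\R^{d_x}$, not only on $K$.

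\textbf{Case $\phi\equiv0$.} Every $f\in\mathcal M_N(\phi)$ is the constant $\bm{b}_2$. Take $\bar{\bm{W}}_2=\zero$, any $\bar{\bm{W}}_1,\bar{\bm{b}}_1$, and $\bar{\bm{b}}_2=\bm{b}_2$. The zero-sum condition then holds trivially.

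The only delicate point is the existence of a nonzero attainable value $a$, which is where the dichotomy above is needed. Where the lemma is used alongside Lemma~\ref{lem:ln-an-zero-sum-conditional}, $\phi$ has unbounded range, so such an $a$ always exists. The rest is bookkeeping.
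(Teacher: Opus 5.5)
\textbf{The construction as written fails the zero-sum condition.} With your choice $\bm{v}=-\bm{W}_2\one_N/a$,
\[
\bar{\bm{W}}_2\one_{N+1}=\bm{W}_2\one_N+\bm{v}=\bigl(1-a^{-1}\bigr)\bm{W}_2\one_N .
\]
This vanishes only when $a=1$ or $\bm{W}_2\one_N=\zero$. So the step ``$=\zero$'' in your verification is false in general. For example, $N=d_y=1$, $\bm{W}_2=[1]$ and $a=2$ give the column sum $1/2$. Your identity $\bar f=f$ is correct. The problem is that you arranged for the extra neuron's \emph{contribution} $a\bm{v}$ to cancel $\bm{W}_2\one_N$. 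The lemma instead constrains the readout \emph{column} $\bm{v}$ itself, regardless of the value the neuron outputs.

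The repair is to drop the division by $a$. Take $\bm{v}=-\bm{W}_2\one_N$ and $\bar{\bm{b}}_2=\bm{b}_2-a\bm{v}=\bm{b}_2+a\,\bm{W}_2\one_N$. Then $\bar{\bm{W}}_2\one_{N+1}=\zero$ and $\bar f=f$ hold exactly for \emph{any} $c\in\R$, including $c$ with $\phi(c)=0$. This is exactly the paper's proof. It follows that the ``delicate point'' about needing a nonzero attainable value, and your separate case $\phi\equiv0$, are artifacts of the error rather than features of the problem. The lemma holds for every $\phi$ with no case split.
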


The proof is given in Appendix~\ref{app:feature-centered-readout-proof}.
Combining Lemmas~\ref{lem:ln-an-zero-sum-conditional}
and~\ref{lem:post-zero-sum}, first reparameterizing the target at width $N+1$
and then appending $M$, gives width $N+2$ and proves
Theorem~\ref{thm:ln-an-unbounded}.

\begin{samepage}
\begin{remark}
Viewed in isolation, Lemma~\ref{lem:post-zero-sum} may look inapplicable: its
zero-sum readout acts after scaling, whereas LN centers before scaling.  In our
construction, the large coordinate $M$ makes the RMSNorm-type scale nearly
input independent and hence approximately linear on the lifted compact family.
Because the residual scaling is a common scalar, the readout centering passes
through it and aligns with LN's inner centering.  Thus
Lemma~\ref{lem:post-zero-sum} removes centering exactly, while
Lemma~\ref{lem:ln-an-zero-sum-conditional} bounds the residual scaling error.
\end{remark}
\end{samepage}

\subsection{Remaining LayerNorm approximation guarantees}

\begin{proposition}[Remaining LayerNorm counterparts]
\label{prop:ln-counterparts}
The affine-PN and homogeneous-PN conclusions of
Theorems~\ref{thm:rms-affine-pre} and~\ref{thm:rms-homogeneous-pre} remain valid
for the corresponding LN classes with width $N+2$.  The general-activation
conclusions of Theorems~\ref{thm:rms-an-general}
and~\ref{thm:rms-plain-pre-general} both remain valid with width $N+2r$.
All stated rates are unchanged.  Moreover, the finite-width obstruction in
Proposition~\ref{prop:fixed-width-obstruction} remains valid when RMSNorm is
replaced by LN.
\end{proposition}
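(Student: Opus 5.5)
The plan is to reduce every LN statement to its RMSNorm counterpart by one device. Arrange the preactivation (for PN) or the readout (for AN) so that the vector fed to LN already has zero coordinate sum. Then $\bm{P}_d$ acts as the identity and $\operatorname{LN}=\operatorname{RMSN}$ on the relevant set. The extra width is spent on mean-zero pairs, as in the two-coordinate lift $\bm{y}_M(\bm{h})$ described before Theorem~\ref{thm:ln-an-unbounded}.

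For the affine-PN and homogeneous-PN cases, write the target preactivation as $\bm{t}(\bm{x})=\bm{W}_1\bm{x}+\bm{b}_1$ and use the affine first layer to produce
\[
\bm{y}_M(\bm{x})=\bigl[\bm{t}(\bm{x})^\top,\,-s(\bm{x})/2+M,\,-s(\bm{x})/2-M\bigr]^\top,
\qquad s=\one_N^\top\bm{t}.
\]
This vector has zero sum, so $\operatorname{LN}(\bm{y}_M)=\sqrt{N+2}\,\bm{y}_M/\norm{\bm{y}_M}$. Its first $N$ coordinates are
\[
c\,\bm{t}\bigl(1+(\norm{\bm{t}}^2+s^2/2)/(2M^2)\bigr)^{-1/2},
\qquad c=\sqrt{N+2}/(\sqrt2 M),
\]
since $\norm{\bm{y}_M}^2=\norm{\bm{t}}^2+s^2/2+2M^2$.
\begin{itemize}
\item In the affine case, choose $\alpha_i=1/c$ and $\beta=\zero$ on the first $N$ coordinates, and give the two extra coordinates zero readout. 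The argument of $\phi$ is then $\bm{t}$ times a scalar equal to $1+O(M^{-2})$ uniformly on the compact set $\bm{t}(K)$.
\item From this point the proof of Theorem~\ref{thm:rms-affine-pre} carries over verbatim. Uniform continuity of $\phi$ on a compact neighbourhood gives the $\varepsilon$ statement, and local Lipschitzness gives $O_f(M^{-2})$.
\item In the homogeneous case, the common factor is positive, so it passes through $\phi$. Rescaling $\bm{W}_2$ by $c^{-1}$ gives exactly the scalar error of Theorem~\ref{thm:rms-homogeneous-pre}, again $O(M^{-2})$.
\end{itemize}

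For the general-activation cases with width $N+2r$, replace the single pair $\pm M$ by $r$ pairs $(-s/(2r)+1,\,-s/(2r)-1)$. The total sum is again zero. The squared norm of the augmentation is $2r+s^2/(2r)$, so the first $N$ normalized coordinates are
\[
\Bigl(\frac{N+2r}{\norm{\bm{t}}^2+2r+s^2/(2r)}\Bigr)^{1/2}\bm{t}=(1+O(1/r))\,\bm{t}
\]
uniformly. The PN argument of Theorem~\ref{thm:rms-plain-pre-general} then goes through unchanged, giving convergence in general and the $C_f/r$ rate under local Lipschitzness.

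For AN with general $\phi$, first apply Lemma~\ref{lem:post-zero-sum} to obtain a zero-sum readout. The lemma costs one width, which the $2r$ count accommodates if one of the $r$ replicated coordinates is used. Next pick two attainable values $a\neq b$ (possible since $\phi$ is nonconstant) and append $r$ copies of each, realized by constant preactivations. The appended block has mean $(a+b)/2$ and contributes variance of order $r(a-b)^2/2$ about its own mean.

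I would redo the computation behind \Eqn\ref{eq:ln-an-one-coordinate-error} with $\tau$ chosen to match the dominant variance term. I expect a decomposition of the form
\[
(\lambda-1)\bm{A}\bm{z}+(\text{scalar})\,\bm{A}\one=(\lambda-1)\bm{A}\bm{z},
\]
since the zero-sum readout kills the $\bm{A}\one$ term. It then remains to show $\lambda-1=O(1/r)$, and this step is the main obstacle. The mean shift introduced by the $2r$ constants is independent of the input, but the input-dependent part of $\mu$ is $s/(N+2r)$, and I must check that it enters only through $\bm{A}\one$ or at order $1/r$. If the bookkeeping with $2r$ coordinates plus the one from Lemma~\ref{lem:post-zero-sum} exceeds $N+2r$, the fallback is to absorb the reparameterization coordinate into one of the constant copies, since both are constant activated features.

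Finally, the obstruction for LN is immediate. Every coordinate of $\operatorname{LN}(\bm{h})\in\R^d$ is bounded in absolute value by $\sqrt{d}$, because its squared norm is $d$ or it is $\zero$. Hence $\psi_d$ vanishes on every coordinate, and every width-$d$ no-affine LN-PN network is constant, exactly as in Proposition~\ref{prop:fixed-width-obstruction}.
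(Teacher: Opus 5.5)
Your affine-PN, homogeneous-PN, and obstruction arguments are the paper's: the same mean-zero lift $\bm{W}_{1,M},\bm{b}_{1,M}$, the same scaling (your $c$ is the paper's $c_M^{-1}$), and the same coordinate bound $|u_i|\le\sqrt d$. The two general-activation cases take different but valid routes.

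For PN, you center exactly by putting $-s/(2r)$ into each of the $2r$ augmentation coordinates. LN then coincides with RMSNorm, and the whole error is one scalar factor $1+O(r^{-1})$. The paper instead appends the input-independent block $[-\one_r^\top,\one_r^\top]^\top$ (Lemma~\ref{lem:plain-pre-identity}). It does not remove the mean $s/D$ and only observes that this mean adds a second $O(r^{-1})$ term.

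For AN, your fallback is legitimate. Take $c=\xi_a$ in Lemma~\ref{lem:post-zero-sum}, so the reparameterization coordinate is one of the $a$-copies. Your feature vector $\bm{y}_r=[\bm{z}^\top,a\one_r^\top,b\one_r^\top]^\top$ is then literally the paper's, the width is $D=N+2r$, and only the readout differs.

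The step you call the main obstacle closes at once. The zero-sum readout gives the error $(\tau_r/\sigma_r-1)(\bm{W}_2\bm{z}-a\bm{W}_2\one_N)$ exactly. Write $v=\norm{\bm{z}}^2$, $s=\one_N^\top\bm{z}$, $B_r=r(a+b)$, and let $\tau_r^2=r(a^2+b^2)/D-B_r^2/D^2$ be the variance of the augmentation alone. Then $\sigma_r^2=\norm{\bm{y}_r}^2/D-\mu_r^2=\tau_r^2+\Delta_r$ with $\Delta_r=v/D-(2B_rs+s^2)/D^2=O(r^{-1})$, because $B_r/D=O(1)$, while $\tau_r^2\to(a-b)^2/4>0$.

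The paper shows the reparameterization is unnecessary here. With $2r$ constant coordinates, the input-dependent mean $s/D$ is already $O(r^{-1})$. The constant mean $m_r=B_r/D$ is cancelled by adding $m_r\bm{W}_2\one_N$ to the output bias. This leaves the term $[(1-\rho_r)m_r-\rho_r s/D]\bm{W}_2\one_N=O(r^{-1})$, where $\rho_r=\tau_r/\sigma_r$.

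Your version buys a pure scalar error, at the price of a nonzero readout on an augmentation coordinate and the width bookkeeping. The paper's keeps every augmentation readout at zero. It also makes explicit why Lemma~\ref{lem:post-zero-sum} is needed only in the one-coordinate unbounded case, where the residual $-(s/n)\bm{A}\one_n$ does not vanish.
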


The contrast with RMSNorm is explicit in the widths that grow with $r$.  Its
general AN construction repeats one attainable value $r$ times and has width
$N+r$.  After LN centering, copies of a single value cannot provide a
nondegenerate limiting variance, so we repeat each of two distinct attainable
values $r$ times, giving width $N+2r$.  The constant part of the induced mean
is absorbed into the output bias; the remaining input-dependent centering and
scaling errors are both $O(r^{-1})$.  Thus, unlike the one-sided unbounded
construction, this argument does not require
Lemma~\ref{lem:post-zero-sum}.  For PN networks, the analogous replacement of
$r$ identical coordinates by $r$ symmetric pairs also gives width $N+2r$.
The two-coordinate linearization in
Appendix~\ref{app:lifted-ln-proof} supplies the affine and homogeneous
constructions, while replicated symmetric
coordinates handle a general continuous activation.  Appendix~\ref{app:ln-counterparts-proofs}
gives the detailed arguments.

\paragraph{Summary.}
This section extends the five RMSNorm approximation results to LayerNorm by
reparameterizing its centering.  The next section establishes their joint
worst-case width lower bounds and transfers them to universal approximation.

\section{Width lower bounds and universal approximation}
\label{sec:lower-bounds-universality}

\subsection{Joint worst-case width lower bounds}
\label{subsec:joint-lower-bounds}

The $N+1$ RMSNorm and $N+2$ LayerNorm widths are jointly sharp.  In
each case the identity target on the unit ball exposes the obstruction: when
the normalized layer is too narrow, it collapses a nontrivial input segment.

\begin{proposition}[Worst-case necessity of one additional coordinate]
\label{prop:rms-one-coordinate-necessity}
Let $N\ge1$ and $K=\{\bm{x}\in\R^N:\norm{\bm{x}}\le1\}$.  For every $d\le N$
and every width-$d$ map
\begin{equation}
 g(\bm{x})=\bm{W}_2\operatorname{RMSN}
 (\bm{W}_1\bm{x}+\bm{b}_1)+\bm{b}_2\in\R^N,
 \label{eq:rms-lower-bound-map}
\end{equation}
we have $\sup_{\bm{x}\in K}\norm{g(\bm{x})-\bm{x}}\ge\frac12$.
\end{proposition}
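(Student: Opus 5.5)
Since $d\le N$, the matrix $\bm{W}_1\in\R^{d\times N}$ has a nontrivial kernel, or else $\bm{W}_1\bm{x}+\bm{b}_1$ sweeps out an affine set on which RMSN is radially degenerate. The plan is to find two points $\bm{x},\bm{x}'\in K$ with $\norm{\bm{x}-\bm{x}'}\ge1$ and $g(\bm{x})=g(\bm{x}')$. The triangle inequality then gives $\norm{g(\bm{x})-\bm{x}}+\norm{g(\bm{x}')-\bm{x}'}\ge\norm{\bm{x}-\bm{x}'}\ge1$, so one of the two terms is at least $\tfrac12$.

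\textbf{Case $d<N$.} The kernel of $\bm{W}_1$ contains a unit vector $\bm{v}$. Take $\bm{x}=\bm{v}$ and $\bm{x}'=-\bm{v}$. Both have the same preactivation $\bm{b}_1$, hence the same value of $g$, and $\norm{\bm{x}-\bm{x}'}=2$.

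\textbf{Case $d=N$ with $\bm{W}_1$ singular.} This is handled exactly as the previous case.

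\textbf{Case $d=N$ with $\bm{W}_1$ invertible.} Here I exploit radial collapse instead. RMSN is constant along each open ray from the origin, and it sends $\zero$ to $\zero$. The affine map $\bm{x}\mapsto\bm{W}_1\bm{x}+\bm{b}_1$ is a bijection, so consider the point $\bm{x}_0=-\bm{W}_1^{-1}\bm{b}_1$ that maps to the origin.

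If $\norm{\bm{x}_0}<1$, pick any unit direction $\bm{u}$ and take the segment from $\bm{x}_0$ along $\bm{u}$ out to the boundary of $K$. Every point of this segment other than $\bm{x}_0$ maps onto the same ray $\{t\bm{W}_1\bm{u}:t>0\}$. Its endpoints therefore share the same $g$ value, but the segment length may be small.

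A cleaner choice is to take the chord of $K$ through $\bm{x}_0$ in direction $\bm{u}$, then pick $\bm{u}$ so that the part of the chord on one side of $\bm{x}_0$ has length at least $1$. The direction $\bm{u}$ pointing away from the nearer boundary works, because the far side of the chord has length at least the radius whenever $\bm{x}_0\in K$.

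If $\norm{\bm{x}_0}\ge1$, the ray from $\bm{x}_0$ through the origin crosses $K$ along a diameter. The diameter's endpoints $\pm\bm{x}_0/\norm{\bm{x}_0}$ lie on the same open ray from $\bm{x}_0$, because $\bm{x}_0\notin$ the interior of $K$. The boundary case $\norm{\bm{x}_0}=1$ needs a limiting argument or the choice of $\bm{x}_0$ itself, whose image is $\zero$. Their images are positive multiples of a common vector, so $g$ agrees on them, and their distance is $2$.

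The main obstacle is the case $\norm{\bm{x}_0}<1$ when $\bm{x}_0$ lies near the boundary. The fix is to use the far side of the chord through $\bm{x}_0$ in the direction $-\bm{x}_0/\norm{\bm{x}_0}$, or any direction if $\bm{x}_0=\zero$. That far side runs from $\bm{x}_0$ to $-\bm{x}_0/\norm{\bm{x}_0}$ and has length $1+\norm{\bm{x}_0}\ge1$.

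All points of the open segment map into one ray, so $g$ is constant there. Continuity fails only at $\bm{x}_0$ itself, so I will take points $\bm{x}$ slightly displaced from $\bm{x}_0$ together with $\bm{x}'$ at the far endpoint. Their distance is arbitrarily close to $1+\norm{\bm{x}_0}\ge1$. The triangle inequality then yields a supremum of at least $\tfrac12$, with the bound reached in the limit, which suffices for the sup.
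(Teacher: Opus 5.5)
Your proof is correct and takes the same approach as the paper. When $\bm{W}_1$ is singular you use the kernel pair $\pm\bm{v}$; when it is invertible you use the segment from $-\bm{v}$ to $\bm{x}_0=t\bm{v}$, of length $1+\min\{t,1\}$ inside $K$, whose images all lie on one open ray, with a limit taken at the endpoint that maps to the origin. One small slip: in the case $\norm{\bm{x}_0}=1$ you cannot use $\bm{x}_0$ itself, because $g(\bm{x}_0)=\bm{b}_2$ need not equal the common value on the ray, but the limiting argument in your final paragraph already covers this case.
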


The proof uses rank deficiency when $d<N$ and the ray invariance of RMSNorm
when $d=N$.  The same argument applies to LN after replacing the affine map by
its centered version $\bm{P}_d(\bm{W}_1\bm{x}+\bm{b}_1)$, where
$\bm{P}_d=\bm{I}_d-d^{-1}\one_d\one_d^\top$ is the centering projection.  Since
$\operatorname{rank}(\bm{P}_d\bm{W}_1)\le d-1$, every width $d\le N+1$ LN map
likewise has error at least $1/2$, so the first admissible width is $N+2$.

With $\phi(t)=t$, Proposition~\ref{prop:rms-one-coordinate-necessity} and its
LN counterpart show that the $N+1$ RMSNorm and $N+2$ LayerNorm
width bounds are worst-case sharp.  The $N+r$ and $N+2r$ widths remain
constructive upper bounds.  Appendix~\ref{app:rms-one-coordinate-lower-bound}
gives the shared argument.

\subsection{Universal approximation transfer}

The preceding relative-approximation guarantees transfer any applicable
universal-approximation theorem for one-hidden-layer MLPs to the corresponding
normalized classes.

\begin{corollary}[Universal approximation transfer]
\label{cor:density-transfer}
Suppose that $\bigcup_{N\ge1}\mathcal M_N(\phi)$ is dense in
$C(K;\R^{d_y})$ under the uniform norm.  If $\phi$ is continuous and
nonconstant, then the unions of both RMS-AN and LN-AN classes are dense.  If
$\phi$ is continuous, then the corresponding affine and no-affine
PN unions are dense as well.
\end{corollary}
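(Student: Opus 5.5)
The plan is a standard $\varepsilon/2$ triangle-inequality argument: use the density hypothesis to reach an MLP target, then use the relative-approximation theorems to reach a normalized network. Fix $F\in C(K;\R^{d_y})$ and $\varepsilon>0$. By the density hypothesis there is some $N$ and a target $f\in\mathcal M_N(\phi)$ with $\norm{F-f}_{\infty,K}<\varepsilon/2$. It then suffices, for each architecture, to produce a normalized network $\widehat f$ in the relevant class, at some finite width, with $\norm{f-\widehat f}_{\infty,K}<\varepsilon/2$. The triangle inequality then gives $\norm{F-\widehat f}_{\infty,K}<\varepsilon$, and $\widehat f$ lies in the union over widths of that class.

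For the AN classes I will invoke Theorem~\ref{thm:rms-an-general} for RMSNorm. Since $\phi$ is continuous and nonconstant, there is $\widehat f_r\in\mathcal R_{N+r}^{\mathrm{AN}}(\phi)$ with error at most $C_f/r$; choosing $r\ge r_f$ with $C_f/r<\varepsilon/2$ gives the claim. For LN, Proposition~\ref{prop:ln-counterparts} transfers this general-activation conclusion to $\mathcal P_{N+2r}^{\mathrm{AN}}(\phi)$ with the same rate, so the same choice of $r$ works. The unbounded-range theorems are not needed here, because only nonconstancy is assumed.

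For the PN classes, assume only continuity. In the affine case I apply Theorem~\ref{thm:rms-affine-pre}, which gives an $\varepsilon/2$-approximant in $\mathcal R_{N+1}^{\mathrm{PN,aff}}(\phi)$ directly; its LN version at width $N+2$ comes from Proposition~\ref{prop:ln-counterparts}. In the no-affine case I apply Theorem~\ref{thm:rms-plain-pre-general}, which gives convergence $\supnorm{\widehat f_r-f}\to0$ with no Lipschitz assumption, and then pick $r$ large; the LN version again comes from Proposition~\ref{prop:ln-counterparts} at width $N+2r$. Nonconstancy is not needed for the PN classes. If $\phi$ happened to be constant, the density hypothesis would fail for nonconstant targets anyway, so that case is vacuous.

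The only point I expect to need care is bookkeeping rather than analysis. The output dimension of the normalized networks must match $d_y$, and the constants $C_f$ and $r_f$ may depend on the fixed realization of $f$ and on $K$. Both are permitted, since $f$ is fixed before $r$ or $M$ is chosen. I will also confirm that all the cited results approximate an arbitrary $f\in\mathcal M_N(\phi)$ on the same compact set $K$ in the uniform norm, which the earlier statements do.
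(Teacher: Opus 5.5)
Your proposal is correct and follows the paper's proof essentially verbatim. You use an $\varepsilon/2$ step from the density hypothesis to some $f\in\mathcal M_N(\phi)$, then Theorems~\ref{thm:rms-an-general}, \ref{thm:rms-affine-pre}, and \ref{thm:rms-plain-pre-general} for the three RMSNorm claims, Proposition~\ref{prop:ln-counterparts} for their LN counterparts, and the triangle inequality to finish.
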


Indeed, approximate the target first by a one-hidden-layer MLP and then
approximate that MLP by the corresponding normalized network, the triangle
inequality gives the claim.  Appendix~\ref{app:density-transfer-proof} gives
the details.

\section{Numerical illustration}
\label{sec:experiments}
\label{sec:numerical-illustration}

We test the three fixed-overhead rate predictions using six independently
sampled width-$16$ target MLPs in each setting.  For each target and
augmentation magnitude $M$, we instantiate the LN and RMSNorm approximants
using the explicit constructions from the proofs, without fitting their
parameters.  We evaluate the maximum absolute output error on a fixed test
grid and plot its mean over the six targets in
Figure~\ref{fig:five-construction-rates}.  In
Figure~\ref{fig:five-construction-rates}(a), the RMSNorm AN error eventually
decreases in line with the $M^{-2}$ rate in
Theorem~\ref{thm:rms-an-unbounded}, whereas the LN AN error follows the
$M^{-1}$ rate in Theorem~\ref{thm:ln-an-unbounded}.  The LN curve drops
sharply between $M=4$ and $M=8$ and then changes little at $M=16$.
The $M^{-1}$ rate is asymptotic and does not prescribe the shape of this
early part of the curve.  Figures~\ref{fig:five-construction-rates}(b)
and~\ref{fig:five-construction-rates}(c) show nearly overlapping LN and RMSNorm
errors consistent with the $M^{-2}$ rates for affine and homogeneous PN in
Theorems~\ref{thm:rms-affine-pre} and~\ref{thm:rms-homogeneous-pre} and their
LN counterparts in Proposition~\ref{prop:ln-counterparts}.  The gray lines
indicate reference slopes, not numerical error bounds.

We also test the remaining width-growing constructions, other activations, and
target widths $N\in\{4,8,16,32\}$.  These experiments exhibit the same predicted
behavior.  Appendix~\ref{app:numerical-details} gives the complete results,
including interquartile ranges and the shifted-ReLU width obstruction.

\FloatBarrier
\begin{figure}[ht]
    \centering
    \begin{subfigure}[t]{0.32\textwidth}
        \centering
        \includegraphics[width=\textwidth]{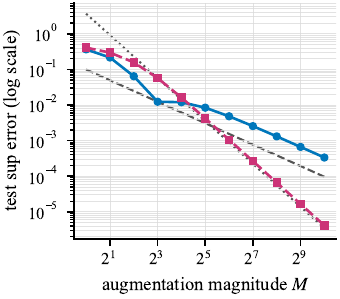}
        \caption{Unbounded AN (ReLU).}
        \label{fig:five-construction-rates-a}
    \end{subfigure}\hfill
    \begin{subfigure}[t]{0.32\textwidth}
        \centering
        \includegraphics[width=\textwidth]{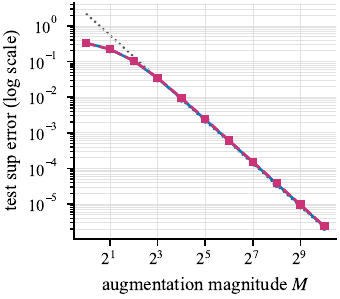}
        \caption{Affine PN ($\tanh$).}
        \label{fig:five-construction-rates-b}
    \end{subfigure}\hfill
    \begin{subfigure}[t]{0.32\textwidth}
        \centering
        \includegraphics[width=\textwidth]{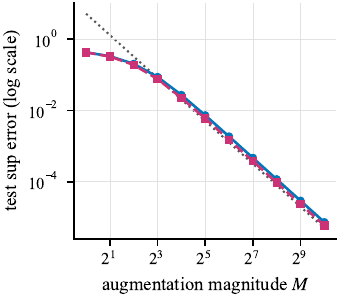}
        \caption{Homogeneous PN (ReLU).}
        \label{fig:five-construction-rates-c}
    \end{subfigure}

    \includegraphics[width=0.58\textwidth]{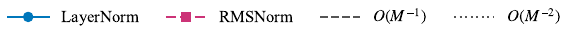}
    \caption{Representative verification of the three $M$-controlled
    constructions at $N=16$.  Each curve reports the mean of grid-based maximum
    absolute errors over
    six target MLPs generated from independent random seeds.  ReLU is used for
    unbounded AN and homogeneous PN, whereas $\tanh$ is used for affine PN.
    Both axes use logarithmic scales.  Markers distinguish LayerNorm and
    RMSNorm, and gray lines illustrate reference slopes rather than error
    bounds.  The complete
    four-width activation sweep and the
    width-growing experiments appear in
    Appendix~\ref{app:numerical-details}.}
    \label{fig:five-construction-rates}
\end{figure}
\FloatBarrier

\section{Conclusion}

Starting from the nonlinearity of input-dependent scaling, we show that width
augmentation can make normalization approximately linear on compact feature
sets.  Under the corresponding assumptions, inserting RMSNorm after or before
the activation of a one-hidden-layer MLP can therefore preserve its represented
function arbitrarily well by increasing width.  LayerNorm admits analogous
guarantees after its centering operation is absorbed by reparameterization.
Applicable universality results transfer, the $N+1$ RMSNorm and $N+2$
LayerNorm bounds are worst-case sharp, and experiments recover the predicted
approximation rates.

\paragraph{Limitations.}
The results concern LN and RMSNorm on compact sets in one shallow
normalization--activation composition. Fixed numerical stabilizers, coupled
normalization layers, residual architectures, and finite-precision effects lie
outside the analysis.  Whether the constructive widths $N+r$ and $N+2r$ are
necessary for standard fixed activations remains open.  The scalar synthetic
experiments do not establish behavior in high-dimensional or deep networks.

\subsection*{Acknowledgments}
This work was supported by the National Natural Science Foundation of China
(62476016, 62441617), and the Fundamental Research Funds for the Central
Universities and the Beijing Natural Science Foundation (Grant No. QY24137).

\subsection*{Code availability}
The experiment code and numerical data are available at
\url{https://github.com/Musicath/AALN-experiments}.

\subsection*{AI use statement}
We used generative AI tools to assist with manuscript editing, proof checking,
literature search and citation checking, code implementation and debugging,
and the preparation and interpretation of synthetic experiments and figures.
The research question, mathematical claims and constructions, and experimental
objectives were formulated by the authors. All AI-assisted outputs were
reviewed and verified by the authors, who take full responsibility for the
final content.

\nocite{*}
\bibliographystyle{iclr2027_conference}
\bibliography{AALN_references}

@article{ba2016layer,
  title={{Layer normalization}},
  author={Ba, Jimmy Lei and Kiros, Jamie Ryan and Hinton, Geoffrey E},
  journal={arXiv preprint arXiv:1607.06450},
  year={2016}
}

@article{barron1993universal,
  title={{Universal approximation bounds for superpositions of a sigmoidal function}},
  author={Barron, Andrew R},
  journal={IEEE Transactions on Information theory},
  volume={39},
  number={3},
  pages={930--945},
  year={1993},
  publisher={IEEE}
}

@article{baroni2025removal,
  title={{Transformers Don't Need LayerNorm at Inference Time: Scaling LayerNorm Removal to GPT-2 XL and the Implications for Mechanistic Interpretability}},
  author={Baroni, Luca and Khara, Galvin and Schaeffer, Joachim and Subkhankulov, Marat and Heimersheim, Stefan},
  journal={arXiv preprint arXiv:2507.02559},
  year={2025}
}

@inproceedings{brody2023expressivity,
  title={{On the expressivity role of LayerNorm in transformers’ attention}},
  author={Brody, Shaked and Alon, Uri and Yahav, Eran},
  booktitle={Findings of the Association for Computational Linguistics: ACL 2023},
  pages={14211--14221},
  year={2023}
}

@article{cybenko1989approximation,
  title={{Approximation by superpositions of a sigmoidal function}},
  author={Cybenko, George},
  journal={Mathematics of control, signals and systems},
  volume={2},
  number={4},
  pages={303--314},
  year={1989},
  publisher={Springer}
}

@inproceedings{glorot2010difficulty,
  title={{Understanding the difficulty of training deep feedforward neural networks}},
  author={Glorot, Xavier and Bengio, Yoshua},
  booktitle={Proceedings of the thirteenth international conference on artificial intelligence and statistics},
  pages={249--256},
  year={2010},
  organization={JMLR Workshop and Conference Proceedings}
}

@inproceedings{guo2024slab,
  author={Jialong Guo and Xinghao Chen and Yehui Tang and Yunhe Wang},
  title={{SLAB: Efficient Transformers with Simplified Linear Attention and Progressive Re-parameterized Batch Normalization}},
  year={2024},
  booktitle={Proceedings of the 41st International Conference on Machine Learning},
  series={Proceedings of Machine Learning Research},
  volume={235},
  pages={16802--16812},
  url={https://proceedings.mlr.press/v235/guo24a.html}
}

@inproceedings{he2015delving,
  title={{Delving deep into rectifiers: Surpassing human-level performance on imagenet classification}},
  author={He, Kaiming and Zhang, Xiangyu and Ren, Shaoqing and Sun, Jian},
  booktitle={Proceedings of the IEEE international conference on computer vision},
  pages={1026--1034},
  year={2015}
}

@inproceedings{he2016deep,
  title={{Deep residual learning for image recognition}},
  author={He, Kaiming and Zhang, Xiangyu and Ren, Shaoqing and Sun, Jian},
  booktitle={Proceedings of the IEEE conference on computer vision and pattern recognition},
  pages={770--778},
  year={2016}
}

@article{hornik1991approximation,
  title={{Approximation capabilities of multilayer feedforward networks}},
  author={Hornik, Kurt},
  journal={Neural networks},
  volume={4},
  number={2},
  pages={251--257},
  year={1991},
  publisher={Elsevier}
}

@inproceedings{ioffe2015batch,
  title={{Batch normalization: Accelerating deep network training by reducing internal covariate shift}},
  author={Ioffe, Sergey and Szegedy, Christian},
  booktitle={International conference on machine learning},
  pages={448--456},
  year={2015},
  organization={pmlr}
}

@inproceedings{jiang2023pre,
  author={Zixuan Jiang and Jiaqi Gu and Hanqing Zhu and David Z. Pan},
  title={{Pre-RMSNorm and Pre-CRMSNorm Transformers: Equivalent and Efficient Pre-LN Transformers}},
  year={2023},
  booktitle={Advances in Neural Information Processing Systems},
  volume={36},
  pages={45777--45793},
  url={https://papers.nips.cc/paper_files/paper/2023/hash/8f1bacee31caf990a4f08d84f0ccb322-Abstract-Conference.html}
}

@misc{kanavalau2026gated,
  author={Andrei Kanavalau and Carmen Amo Alonso and Sanjay Lall},
  title={{Gated Normalization Removal and Scale Anchoring in Pre-Norm Transformers}},
  year={2026},
  eprint={2602.10408},
  archivePrefix={arXiv},
  note={arXiv version v2},
  url={https://arxiv.org/abs/2602.10408v2}
}

@article{leshno1993nonpolynomial,
  title={{Multilayer feedforward networks with a nonpolynomial activation function can approximate any function}},
  author={Leshno, Moshe and Lin, Vladimir Ya and Pinkus, Allan and Schocken, Shimon},
  journal={Neural networks},
  volume={6},
  number={6},
  pages={861--867},
  year={1993},
  publisher={Elsevier}
}

@article{lin2018resnet,
  title={{Resnet with one-neuron hidden layers is a universal approximator}},
  author={Lin, Hongzhou and Jegelka, Stefanie},
  journal={Advances in neural information processing systems},
  volume={31},
  year={2018}
}

@article{lu2017expressive,
  title={{The expressive power of neural networks: A view from the width}},
  author={Lu, Zhou and Pu, Hongming and Wang, Feicheng and Hu, Zhiqiang and Wang, Liwei},
  journal={Advances in neural information processing systems},
  volume={30},
  year={2017}
}

@inproceedings{ni2024nonlinearity,
  author={Yunhao Ni and Yuxin Guo and Junlong Jia and Lei Huang},
  title={{On the Nonlinearity of Layer Normalization}},
  year={2024},
  booktitle={Proceedings of the 41st International Conference on Machine Learning},
  series={Proceedings of Machine Learning Research},
  volume={235},
  pages={37957--37998},
  url={https://proceedings.mlr.press/v235/ni24b.html}
}

@article{ni2025pln,
  title={{Parallel Layer Normalization for Universal Approximation}},
  author={Ni, Yunhao and Guo, Yuxin and Liu, Yuhe and Sun, Wenxin and Luo, Jie and Wu, Wenjun and Huang, Lei},
  journal={arXiv preprint arXiv:2505.13142},
  year={2025},
  eprint={2505.13142},
  archivePrefix={arXiv},
  url={https://arxiv.org/abs/2505.13142v2}
}

@inproceedings{park2021minimum,
  author={Sejun Park and Chulhee Yun and Jaeho Lee and Jinwoo Shin},
  title={{Minimum Width for Universal Approximation}},
  year={2021},
  booktitle={International Conference on Learning Representations},
  url={https://openreview.net/forum?id=O-XJwyoIF-k}
}

@article{riechers2024geometry,
  title={{Geometry and dynamics of LayerNorm}},
  author={Riechers, Paul M},
  journal={arXiv preprint arXiv:2405.04134},
  year={2024}
}

@inproceedings{stollenwerk2026relationship,
  title={{On the mathematical relationship between layer normalization and dynamic activation functions}},
  author={Stollenwerk, Felix},
  booktitle={Proceedings of the 19th Conference of the European Chapter of the Association for Computational Linguistics (Volume 2: Short Papers)},
  pages={674--681},
  year={2026}
}

@article{xu2019understanding,
  title={{Understanding and improving layer normalization}},
  author={Xu, Jingjing and Sun, Xu and Zhang, Zhiyuan and Zhao, Guangxiang and Lin, Junyang},
  journal={Advances in neural information processing systems},
  volume={32},
  year={2019}
}

@article{zhang2019rmsnorm,
  title={{Root mean square layer normalization}},
  author={Zhang, Biao and Sennrich, Rico},
  journal={Advances in neural information processing systems},
  volume={32},
  year={2019}
}

@inproceedings{zhu2025transformers,
  title={{Transformers without normalization}},
  author={Zhu, Jiachen and Chen, Xinlei and He, Kaiming and LeCun, Yann and Liu, Zhuang},
  booktitle={2025 IEEE/CVF Conference on Computer Vision and Pattern Recognition (CVPR)},
  pages={14901--14911},
  year={2025},
  organization={IEEE}
}
\newpage

\appendix

\section{Proofs for Section 3}

\subsection{Proof of Theorem~\ref{thm:lifted-rmsn-linearization}}
\label{app:lifted-rmsn-proof}

\begin{proof}[Proof of Theorem~\ref{thm:lifted-rmsn-linearization}]
Put $d=N+1$ and define
\begin{equation}
 \bm{W}_{1,M}=\begin{bmatrix}\bm{I}_N\\\zero^\top\end{bmatrix},
 \qquad \bm{b}_{1,M}=\begin{bmatrix}\zero_N\\M\end{bmatrix},
 \qquad
 \bm{W}_{2,M}=\frac{|M|}{\sqrt d}
 \begin{bmatrix}\bm{W}^\star&\zero\end{bmatrix},
 \qquad \bm{b}_{2,M}=\zero.
 \label{eq:rms-one-coordinate-maps}
\end{equation}
Then
\begin{equation}
 f_M(\bm{h})
 =\left(1+\frac{\norm{\bm{h}}^2}{M^2}\right)^{-1/2}
 \bm{W}^\star\bm{h}.
 \label{eq:rms-one-coordinate-identity}
\end{equation}
Since $\mathcal H$ is compact, the elementary inequality
 $0\le 1-(1+u)^{-1/2}\le u/2$ for $u\ge0$ gives
\Eqn\ref{eq:lifted-rmsn-linearization-rate}.
\end{proof}

\section{Proofs for Section 4}
\label{app:rms-an-proofs}

\subsection{Proof of Theorem~\ref{thm:rms-an-unbounded}}

\begin{proof}[Proof of Theorem~\ref{thm:rms-an-unbounded}]
Write $f(\bm{x})=\bm{W}_2\bm{z}(\bm{x})+\bm{b}_2$, where
$\bm{z}(\bm{x})=\phi(\bm{W}_1\bm{x}+\bm{b}_1)$.  For an attainable value $M$,
append one constant activated coordinate and set $d=N+1$:
\begin{equation}
 \bm{y}_M(\bm{x})=
 \begin{bmatrix}\bm{z}(\bm{x})\\M\end{bmatrix},
 \qquad
 \widehat f_M(\bm{x})=
 \frac{|M|}{\sqrt d}
 \begin{bmatrix}\bm{W}_2&\zero\end{bmatrix}
 \operatorname{RMSN}(\bm{y}_M(\bm{x}))+\bm{b}_2.
\end{equation}
Then
\begin{equation}
 \widehat f_M(\bm{x})
 =\left(1+\frac{\norm{\bm{z}(\bm{x})}^2}{M^2}\right)^{-1/2}
 \bm{W}_2\bm{z}(\bm{x})+\bm{b}_2.
 \label{eq:rms-an-unbounded-factor}
\end{equation}
Since $\bm{z}(K)$ is bounded, the inequality used in
Appendix~\ref{app:lifted-rmsn-proof} gives \Eqn\ref{eq:rms-an-unbounded-rate}.
\end{proof}

\subsection{Proof of Theorem~\ref{thm:rms-an-general}}

\begin{proof}[Proof of Theorem~\ref{thm:rms-an-general}]
Choose $a\in\phi(\R)\setminus\{0\}$ and append $r$ copies:
\begin{equation}
 \bm{y}_r(\bm{x})=
 \begin{bmatrix}\bm{z}(\bm{x})\\a\one_r\end{bmatrix}
 \in\R^{N+r}.
\end{equation}
With $d=N+r$, define
\begin{equation}
 \widehat f_r(\bm{x})=
 \frac{\sqrt r|a|}{\sqrt d}
 \begin{bmatrix}\bm{W}_2&\zero\end{bmatrix}
 \operatorname{RMSN}(\bm{y}_r(\bm{x}))+\bm{b}_2.
\end{equation}
The signal factor is
\begin{equation}
 \left(1+\frac{\norm{\bm{z}(\bm{x})}^2}{ra^2}\right)^{-1/2}
 =1+O_f(r^{-1})
 \label{eq:rms-an-replicated-factor}
\end{equation}
uniformly on $K$, proving the claim.
\end{proof}

\section{Proofs for Section 5}
\label{app:rms-pn-proofs}

\subsection{Proof of Theorem~\ref{thm:rms-affine-pre}}

\begin{proof}[Proof of Theorem~\ref{thm:rms-affine-pre}]
Write $f(\bm{x})=\bm{W}_2\phi(\bm{t}(\bm{x}))+\bm{b}_2$ and put $d=N+1$.
Apply RMSNorm to $[\bm{t}(\bm{x})^\top,M]^\top$ and choose the affine scale
\begin{equation}
 \bm{\alpha}_M=\frac{|M|}{\sqrt d}
 \begin{bmatrix}\one_N\\0\end{bmatrix}.
\end{equation}
The first $N$ coordinates after affine scaling are
$\lambda_M(\bm{t}(\bm{x}))\bm{t}(\bm{x})$, where
$\lambda_M(\bm{t})=(1+\norm{\bm{t}}^2/M^2)^{-1/2}$, and the added coordinate
has zero readout weight.  Uniform continuity of $\phi$ proves convergence;
local Lipschitz continuity and $\lambda_M-1=O(M^{-2})$ give the stated rate.
\end{proof}

\subsection{Proof of Proposition~\ref{prop:fixed-width-obstruction}}
\label{app:rms-fixed-width-obstruction}

\begin{proof}[Proof of Proposition~\ref{prop:fixed-width-obstruction}]
Every RMSNorm output $\bm{u}\in\R^d$ satisfies
$\norm{\bm{u}}^2=d$ when nonzero, hence $|u_i|\le\sqrt d$; the same coordinate
bound holds at zero.  Thus $\psi_d(u_i)=0$ for every coordinate, and every
no-affine RMS-PN network has only its final bias left.  By contrast,
$\psi_d(\bm{w}^\top\bm{x}+c)=\operatorname{ReLU}
(\bm{w}^\top\bm{x}+c-\sqrt d)$ is nonconstant for suitable parameters in a
one-hidden-layer MLP.
\end{proof}

\subsection{Proof of Theorem~\ref{thm:rms-homogeneous-pre}}

\begin{proof}[Proof of Theorem~\ref{thm:rms-homogeneous-pre}]
With the notation above, the first $N$ no-affine RMSNorm coordinates are
$c_M\lambda_M(\bm{t}(\bm{x}))\bm{t}(\bm{x})$, where
$c_M=\sqrt{N+1}/|M|$.  Positive homogeneity moves $c_M$ through $\phi$, and
the readout rescales by $c_M^{-1}$.  The remaining factor
$\lambda_M=1+O(M^{-2})$ proves the result exactly as above.
\end{proof}

\subsection{Proof of Theorem~\ref{thm:rms-plain-pre-general}}

\begin{proof}[Proof of Theorem~\ref{thm:rms-plain-pre-general}]
Append $r$ unit coordinates to the preactivation:
\begin{equation}
 \bm{y}_r(\bm{x})=
 \begin{bmatrix}\bm{t}(\bm{x})\\\one_r\end{bmatrix}.
\end{equation}
The first $N$ RMSNorm coordinates are
\begin{equation}
 \bm{u}_r(\bm{x})=
 \left(\frac{N+r}{\norm{\bm{t}(\bm{x})}^2+r}\right)^{1/2}
 \bm{t}(\bm{x})
 =\bm{t}(\bm{x})+O_f(r^{-1})
 \label{eq:rms-pn-replicated-factor}
\end{equation}
uniformly on $K$.  Define
$\widehat f_r=[\bm{W}_2\ \zero]\phi(\operatorname{RMSN}(\bm{y}_r))+\bm{b}_2$.
Uniform continuity of $\phi$ proves convergence, and local Lipschitz continuity
gives \Eqn\ref{eq:rms-plain-pre-rate} for all sufficiently large $r$.
\end{proof}

\section{Proofs for Section 6}
\label{app:ln-counterparts-proofs}

\subsection{LayerNorm counterpart of Theorem~\ref{thm:lifted-rmsn-linearization}}
\label{app:lifted-ln-proof}

\begin{proof}[Derivation of the two-coordinate LN linearization]
Put $d=N+2$, let $s(\bm{h})=\one_N^\top\bm{h}$, and define
\begin{equation}
 \bm{W}_{1,M}=\begin{bmatrix}
  \bm{I}_N\\-\frac12\one_N^\top\\-\frac12\one_N^\top
 \end{bmatrix},
 \qquad
 \bm{b}_{1,M}=\begin{bmatrix}\zero_N\\M\\-M\end{bmatrix},
 \qquad
 \bm{W}_{2,M}=\frac{\sqrt2M}{\sqrt d}
 \begin{bmatrix}\bm{W}^\star&\zero&\zero\end{bmatrix},
 \qquad \bm{b}_{2,M}=\zero.
 \label{eq:lifted-ln-maps}
\end{equation}
The augmented vector is already centered:
\begin{equation}
    \one_d^\top(\bm{W}_{1,M}\bm{h}+\bm{b}_{1,M})=0.
\end{equation}
Moreover, the signal-dependent part is orthogonal to the direction defined by
the two augmentation coordinates, and hence
\begin{equation}
    \norm{\bm{W}_{1,M}\bm{h}+\bm{b}_{1,M}}^2
    =2M^2+\norm{\bm{h}}^2+\frac12s(\bm{h})^2.
    \label{eq:lifted-ln-radius}
\end{equation}
Using \Eqn\ref{eq:exact-ln}, we therefore obtain the exact identity
\begin{equation}
 f_M(\bm{h})
 =
 \left(
  1+\frac{\norm{\bm{h}}^2+\frac12s(\bm{h})^2}{2M^2}
 \right)^{-1/2}\bm{W}^\star\bm{h}.
 \label{eq:lifted-ln-exact-identity}
\end{equation}
Since $\mathcal H$ is compact, both $\norm{\bm{h}}$ and
$\norm{\bm{h}}^2+\frac12s(\bm{h})^2$ are uniformly bounded on $\mathcal H$.  The elementary
inequality
\begin{equation}
    0\le 1-(1+u)^{-1/2}\le\frac{u}{2},
    \qquad u\ge0,
\end{equation}
then gives the uniform $O(M^{-2})$ error asserted in Section~6.1.
\end{proof}

\subsection{Proof of Theorem~\ref{thm:ln-an-unbounded}}

\begin{proof}[AN affine equivalence for LayerNorm]
For an affine AN network,
\begin{equation}
 \bm{W}_2\operatorname{LN}_{\bm{\alpha},\bm{\beta}}(\bm{y})+\bm{b}_2
 =\bm{W}_2\operatorname{diag}(\bm{\alpha})\operatorname{LN}(\bm{y})
  +(\bm{W}_2\bm{\beta}+\bm{b}_2).
\end{equation}
Absorbing $\bm{\alpha}$ and $\bm{\beta}$ into the linear readout and output bias gives
$\mathcal P_N^{\mathrm{AN,aff}}(\phi)
\subseteq\mathcal P_N^{\mathrm{AN,0}}(\phi)$.  Conversely, every no-affine
AN network is obtained from the affine class by choosing
$\bm{\alpha}=\one_N$ and $\bm{\beta}=\zero$.  This proves the reverse inclusion and hence
the equality.
\end{proof}

\label{app:feature-centered-readout-proof}

\begin{proof}[Proof of Lemma~\ref{lem:post-zero-sum}]
Write $f(\bm{x})=\bm{W}_2\bm{z}(\bm{x})+\bm{b}_2$, where
$\bm{z}(\bm{x})=\phi(\bm{W}_1\bm{x}+\bm{b}_1)\in\R^N$.
Choose any $c\in\R$ and append the constant
activated feature $\phi(c)$:
\begin{equation}
 \bar{\bm{W}}_1=
 \begin{bmatrix}\bm{W}_1\\\zero^\top\end{bmatrix},
 \qquad
 \bar{\bm{b}}_1=
 \begin{bmatrix}\bm{b}_1\\c\end{bmatrix},
 \qquad
 \bar{\bm{z}}(\bm{x})=
 \begin{bmatrix}\bm{z}(\bm{x})\\\phi(c)\end{bmatrix}.
\end{equation}
Set
\begin{equation}
 \bar{\bm{W}}_2=
 \begin{bmatrix}\bm{W}_2&-\bm{W}_2\one_N\end{bmatrix},
 \qquad
 \bar{\bm{b}}_2=\bm{b}_2+(\bm{W}_2\one_N)\phi(c).
\end{equation}
Then
\begin{equation}
 \bar{\bm{W}}_2\bar{\bm{z}}(\bm{x})+\bar{\bm{b}}_2
 =\bm{W}_2\bm{z}(\bm{x})-(\bm{W}_2\one_N)\phi(c)
 +\bm{b}_2+(\bm{W}_2\one_N)\phi(c)
 =f(\bm{x}),
\end{equation}
while
\begin{equation}
 \bar{\bm{W}}_2\one_{N+1}
 =\bm{W}_2\one_N-\bm{W}_2\one_N=\zero.
\end{equation}
Thus both the representation and the feature-centering condition hold
exactly.
\end{proof}

\label{app:post-unbounded-proof}

\begin{proof}[Proof of Lemma~\ref{lem:ln-an-zero-sum-conditional}]
Compactness of $K$ and continuity of $\phi$ imply that $\bm{z}(K)$ is bounded.
For an attainable $M\in\phi(\R)$, choose $c_M\in\R$ with
$\phi(c_M)=M$ and append one constant activated coordinate:
\begin{equation}
 \bm{y}_M(\bm{x})=
 \begin{bmatrix}\bm{z}(\bm{x})\\M\end{bmatrix}
 \in\R^d,
 \qquad d=n+1.
 \label{eq:post-augmented-vector}
\end{equation}
This is a valid AN feature vector: append a zero row to $\bm{B}$ and the bias
$c_M$ to $\bm{b}$.  Let
$s(\bm{x})=\one_n^\top\bm{z}(\bm{x})$,
$v(\bm{x})=\norm{\bm{z}(\bm{x})}^2$, and
$\sigma_M(\bm{x})=\sigma(\bm{y}_M(\bm{x}))$.  With
\begin{equation}
 \tau_M=\frac{|M|\sqrt n}{d},
 \label{eq:post-augmentation-target-scale}
\end{equation}
a direct calculation gives
\begin{align}
 \sigma_M(\bm{x})^2
 &=\frac{v(\bm{x})+M^2}{d}-\frac{(s(\bm{x})+M)^2}{d^2},\label{eq:post-augmentation-variance}\\
 \frac{\sigma_M(\bm{x})^2}{\tau_M^2}
 &=1-\frac{2s(\bm{x})}{nM}
   +\frac{dv(\bm{x})-s(\bm{x})^2}{nM^2}.
 \label{eq:post-augmentation-relative-variance}
\end{align}
The functions $s$ and $v$ are uniformly bounded on $K$.  Hence there are
constants $C_0,R_0>0$ such that every attainable $|M|\ge R_0$ satisfies
\begin{equation}
 \sup_{\bm{x}\in K}
 \left|\frac{\sigma_M(\bm{x})^2}{\tau_M^2}-1\right|
 \le \frac{C_0}{|M|}.
 \label{eq:post-augmentation-relative-variance-bound}
\end{equation}
Increasing $R_0$ if necessary makes the ratio lie in $[1/2,3/2]$.
Since $u\mapsto u^{-1/2}$ is Lipschitz on this interval, there is $C_1>0$
such that
\begin{equation}
 \sup_{\bm{x}\in K}
 \left|\frac{\tau_M}{\sigma_M(\bm{x})}-1\right|
 \le \frac{C_1}{|M|}.
 \label{eq:post-augmentation-relative-scale}
\end{equation}

Define the width-$d$ AN network
\begin{equation}
 \widehat g_M(\bm{x})
 =\tau_M
  \begin{bmatrix}\bm{A}&\zero\end{bmatrix}
  \operatorname{LN}(\bm{y}_M(\bm{x}))+\bm{a}.
 \label{eq:post-augmented-network}
\end{equation}
Because $[\bm{A}\;\zero]\one_d=\bm{A}\one_n=\zero$, the readout cancels the
mean subtracted by LN; the augmentation coordinate itself has zero direct readout weight.
Therefore
\begin{equation}
 \widehat g_M(\bm{x})
 =\frac{\tau_M}{\sigma_M(\bm{x})}
 \bm{A}\bm{z}(\bm{x})+\bm{a}.
 \label{eq:post-augmentation-decoding}
\end{equation}
Combining \Eqn\ref{eq:post-augmentation-relative-scale} with boundedness of
$\bm{A}\bm{z}(K)$ yields
\begin{equation}
 \sup_{\bm{x}\in K}\norm{\widehat g_M(\bm{x})-g(\bm{x})}
 \le
 \sup_{\bm{x}\in K}\left|\frac{\tau_M}{\sigma_M(\bm{x})}-1\right|
 \sup_{\bm{x}\in K}\norm{\bm{A}\bm{z}(\bm{x})}
 \le \frac{C_g}{|M|}.
\end{equation}
The constructed width is $d=n+1$.
\end{proof}

\begin{proof}[Proof of Theorem~\ref{thm:ln-an-unbounded}]
Lemma~\ref{lem:post-zero-sum} represents the target $f$ exactly as a width-
$(N+1)$ MLP with a zero-sum readout.  Apply
Lemma~\ref{lem:ln-an-zero-sum-conditional} with $n=N+1$.  The resulting LN-AN
network has width $n+1=N+2$ and satisfies
\Eqn\ref{eq:ln-an-unbounded-rate}.  Since the unbounded range of $\phi$ contains
attainable values of arbitrarily large magnitude, the error can be made smaller
than any prescribed $\varepsilon>0$.
\end{proof}

\subsection{LayerNorm counterpart of Theorem~\ref{thm:rms-an-general}}
\label{app:post-two-value-proof}

\begin{proof}[General AN case of Proposition~\ref{prop:ln-counterparts}]
Write the target as
\begin{equation}
 f(\bm{x})=\bm{W}_2\bm{z}(\bm{x})+\bm{b}_2,
 \qquad
 \bm{z}(\bm{x})
 =\phi(\bm{W}_1\bm{x}+\bm{b}_1)\in\R^N.
\end{equation}
Choose $\xi_a,\xi_b\in\R$ such that
$a=\phi(\xi_a)\ne b=\phi(\xi_b)$.
For $r\ge1$, append $r$ copies of each value:
\begin{equation}
 \bm{y}_r(\bm{x})=
 \begin{bmatrix}
  \bm{z}(\bm{x})\\a\one_r\\b\one_r
 \end{bmatrix}
 \in\R^D,
 \qquad
 D=N+2r.
 \label{eq:app-two-value-vector}
\end{equation}
These constant activated coordinates are realized by constant preactivations
$\xi_a$ and $\xi_b$.  Give all of them zero readout weight and put
\begin{equation}
 \widetilde{\bm{W}}_{2,r}
 =\begin{bmatrix}\bm{W}_2&\zero\end{bmatrix}.
\end{equation}
Then
\begin{equation}
 \widetilde{\bm{W}}_{2,r}\bm{y}_r(\bm{x})
 =\bm{W}_2\bm{z}(\bm{x}),
 \qquad
 \widetilde{\bm{W}}_{2,r}\one_D=\bm{W}_2\one_N.
 \label{eq:app-two-value-readout}
\end{equation}

Write
\begin{equation}
 s(\bm{x})=\one_N^\top\bm{z}(\bm{x}),
 \quad
 v(\bm{x})=\norm{\bm{z}(\bm{x})}^2,
 \quad
 A_r=r(a^2+b^2),
 \quad
 B_r=r(a+b),
\end{equation}
and define the augmentation-only baseline mean and variance by
\begin{equation}
 m_r=\frac{B_r}{D},
 \qquad
 \tau_r^2=\frac{A_r}{D}-\frac{B_r^2}{D^2}.
 \label{eq:app-two-value-baseline}
\end{equation}
The full mean is $\mu_r(\bm{x})=m_r+s(\bm{x})/D$, and the full variance is
\begin{align}
 \sigma_r(\bm{x})^2
 &=\frac{v(\bm{x})+A_r}{D}
   -\frac{(s(\bm{x})+B_r)^2}{D^2}\\
 &=\tau_r^2+\Delta_r(\bm{x}),
 \label{eq:app-two-value-variance}
\end{align}
where
\begin{equation}
 \Delta_r(\bm{x})
 =\frac{v(\bm{x})}{D}
  -\frac{2B_rs(\bm{x})+s(\bm{x})^2}{D^2}.
 \label{eq:app-two-value-perturbation}
\end{equation}
Because $\bm{z}(K)$ is bounded,
$\sup_{\bm{x}\in K}|\Delta_r(\bm{x})|=O(r^{-1})$, while
\begin{equation}
 \tau_r^2\longrightarrow\frac{(a-b)^2}{4}>0.
\end{equation}
Hence, uniformly on $K$,
\begin{equation}
 \frac{\tau_r}{\sigma_r(\bm{x})}=1+O(r^{-1}).
 \label{eq:app-two-value-ratio}
\end{equation}
Finally, define
\begin{equation}
 \widehat f_r(\bm{x})
 =\tau_r\widetilde{\bm{W}}_{2,r}
  \operatorname{LN}(\bm{y}_r(\bm{x}))+\bm{b}_2
  +m_r\bm{W}_2\one_N.
\end{equation}
Put $\rho_r(\bm{x})=\tau_r/\sigma_r(\bm{x})$.  Using
\Eqn\ref{eq:app-two-value-readout} and
$\mu_r=m_r+s/D$, the approximation error is
\begin{equation}
 \widehat f_r(\bm{x})-f(\bm{x})
 =\bigl(\rho_r(\bm{x})-1\bigr)\bm{W}_2\bm{z}(\bm{x})
 +\left[
   \bigl(1-\rho_r(\bm{x})\bigr)m_r
   -\rho_r(\bm{x})\frac{s(\bm{x})}{D}
  \right]\bm{W}_2\one_N.
\end{equation}
Now $m_r=O(1)$, $D^{-1}=O(r^{-1})$, and $s$ is uniformly bounded on $K$.
Together with \Eqn\ref{eq:app-two-value-ratio}, these facts give
$\supnorm{\widehat f_r-f}=O_f(r^{-1})$, completing the proof with width
$D=N+2r$.
\end{proof}

\subsection{LayerNorm counterpart of Theorem~\ref{thm:rms-affine-pre}}
\label{app:affine-pre-proof}

\begin{proof}[Affine-PN case of Proposition~\ref{prop:ln-counterparts}]
Write $f(\bm{x})=\bm{W}_2\phi(\bm{t}(\bm{x}))+\bm{b}_2$, where
$\bm{t}(\bm{x})=\bm{W}_1\bm{x}+\bm{b}_1\in\R^N$, and apply the first
layer parameters $\bm{W}_{1,M},\bm{b}_{1,M}$ from
\Eqn\ref{eq:lifted-ln-maps} to $\bm{t}(\bm{x})$.  Put
$d=N+2$, $c_M=\sqrt2M/\sqrt d$, and choose affine LN
parameters
\begin{equation}
    \bm{\alpha}_M=c_M
    \begin{bmatrix}\one_N\\0\\0\end{bmatrix},
    \qquad \bm{\beta}_M=\zero.
\end{equation}
By \Eqn\ref{eq:lifted-ln-exact-identity} with
$\bm{W}^\star=\bm{I}_N$, the first $N$
coordinates after affine LN are
$\lambda_M(\bm{t}(\bm{x}))\bm{t}(\bm{x})$, where
$\lambda_M(\bm{t})\to1$ uniformly on $\bm{t}(K)$ with error $O(M^{-2})$.  The last
two activated coordinates are assigned zero readout weight, so
\begin{equation}
 \widehat f_M(\bm{x})=
 \begin{bmatrix}\bm{W}_2&\zero&\zero\end{bmatrix}
 \phi\!\left(\operatorname{LN}_{\bm{\alpha}_M,\zero}
 (\bm{W}_{1,M}\bm{t}(\bm{x})+\bm{b}_{1,M})\right)+\bm{b}_2
 \longrightarrow f(\bm{x})
\end{equation}
uniformly by continuity of $\phi$.  Local Lipschitz continuity gives the
stated rate.
\end{proof}

\subsection{LayerNorm counterpart of Proposition~\ref{prop:fixed-width-obstruction}}
\label{app:ln-fixed-width-obstruction-proof}

\begin{proof}[Finite-width obstruction case of
Proposition~\ref{prop:ln-counterparts}]
Every nondegenerate LN output $\bm{u}\in\R^d$ satisfies
$\norm{\bm{u}}^2=d$, so $u_i\le\sqrt d$ for each coordinate.  Thus
$\psi_d(u_i)=0$ for all $i$.  At a degenerate input, our definition
$\operatorname{LN}(\bm{u})=\zero$ gives the same conclusion.  Hence the network
output is its final bias.

On the other hand,
\begin{equation}
 \psi_d(\bm{w}^\top\bm{x}+c)
 =
 \operatorname{ReLU}(\bm{w}^\top\bm{x}+c-\sqrt d),
\end{equation}
so the threshold is absorbed into the first-layer bias of an ordinary
$\psi_d$-network, which can therefore represent nonconstant functions.
\end{proof}

\subsection{LayerNorm counterpart of Theorem~\ref{thm:rms-homogeneous-pre}}
\label{app:homogeneous-pre-proof}

\begin{proof}[Homogeneous PN case of Proposition~\ref{prop:ln-counterparts}]
Write $f(\bm{x})=\bm{W}_2\phi(\bm{t}(\bm{x}))+\bm{b}_2$, where
$\bm{t}(\bm{x})=\bm{W}_1\bm{x}+\bm{b}_1$, and use the same augmented
preactivation $\bm{W}_{1,M}\bm{t}(\bm{x})+\bm{b}_{1,M}$ as in
\Eqn\ref{eq:lifted-ln-maps}.  Put
$d=N+2$ and $c_M=\sqrt2M/\sqrt d$.  If $\lambda_M(\bm{t})$ denotes the scalar
in \Eqn\ref{eq:lifted-ln-exact-identity} for
$\bm{W}^\star=\bm{I}_N$, then the first $N$
coordinates of $\operatorname{LN}(\bm{W}_{1,M}\bm{t}+\bm{b}_{1,M})$ equal
$c_M^{-1}\lambda_M(\bm{t})\bm{t}$.  Positive homogeneity therefore gives
\begin{equation}
 c_M\begin{bmatrix}\bm{W}_2&\zero&\zero\end{bmatrix}
 \phi\!\left(\operatorname{LN}
 (\bm{W}_{1,M}\bm{t}(\bm{x})+\bm{b}_{1,M})\right)+\bm{b}_2
 =\lambda_M(\bm{t}(\bm{x}))\bm{W}_2\phi(\bm{t}(\bm{x}))+\bm{b}_2.
\end{equation}
The factor $\lambda_M$ converges uniformly to one at rate $O(M^{-2})$ on
the compact set $\bm{t}(K)$, proving the claim.
\end{proof}

\subsection{LayerNorm counterpart of Theorem~\ref{thm:rms-plain-pre-general}}
\label{app:plain-pre-identity-proof}

\begin{lemma}[Replicated-coordinate identity limit]
\label{lem:plain-pre-identity}
Let $\bm{t}(\bm{x})=\bm{W}_1\bm{x}+\bm{b}_1\in\R^N$.
For $r\ge1$, define
\begin{equation}
 \bm{y}_r(\bm{x})=
 \begin{bmatrix}
  \bm{t}(\bm{x})\\-\one_r\\\one_r
 \end{bmatrix}
 \in\R^{N+2r}.
 \label{eq:plain-pre-vector}
\end{equation}
Let $\bm{u}_r(\bm{x})$ be the first $N$ coordinates of
$\operatorname{LN}(\bm{y}_r(\bm{x}))$.  Then
\begin{equation}
    \sup_{\bm{x}\in K}\norm{\bm{u}_r(\bm{x})-\bm{t}(\bm{x})}=O(r^{-1}).
    \label{eq:plain-pre-identity}
\end{equation}
\end{lemma}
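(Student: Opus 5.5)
The plan is an exact computation of the LN statistics of $\bm{y}_r(\bm{x})$, followed by a uniform perturbation bound. This mirrors the general AN case of Proposition~\ref{prop:ln-counterparts} and the RMSNorm identity \Eqn\ref{eq:rms-pn-replicated-factor}.

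Write $D=N+2r$, $s(\bm{x})=\one_N^\top\bm{t}(\bm{x})$ and $v(\bm{x})=\norm{\bm{t}(\bm{x})}^2$. Because $\bm{t}$ is affine and $K$ is compact, both $s$ and $v$ are bounded on $K$, say by $S$ and $V$.

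\textbf{Statistics.} The appended blocks $-\one_r$ and $\one_r$ contribute zero to the sum, so $\mu(\bm{y}_r)=s/D$. They contribute $2r$ to the squared norm, so
\begin{equation*}
 \sigma_r^2=\frac{v+2r}{D}-\frac{s^2}{D^2}
 =1+\frac{v-N}{D}-\frac{s^2}{D^2},
\end{equation*}
using $2r/D=1-N/D$.

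\textbf{Nondegeneracy and uniform closeness of $\sigma_r$.} The vector $\bm{y}_r$ always contains both $-1$ and $1$, so it is never constant and $\sigma_r>0$. The exact LN formula \Eqn\ref{eq:exact-ln} therefore applies everywhere, with no appeal to the zero-variance convention. From the display,
\begin{equation*}
 \sup_K|\sigma_r^2-1|\le \frac{V+N}{D}+\frac{S^2}{D^2}=O(r^{-1}).
\end{equation*}
For $r$ large this places $\sigma_r^2$ in $[1/2,3/2]$. Since $u\mapsto u^{-1/2}$ is Lipschitz on that interval, $\sup_K|\sigma_r^{-1}-1|=O(r^{-1})$. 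For the finitely many small $r$, the $O(r^{-1})$ claim follows trivially by enlarging the constant, since each $\sup_K\norm{\bm{u}_r-\bm{t}}$ is finite.

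\textbf{Decoding the first $N$ coordinates.} By \Eqn\ref{eq:exact-ln},
\begin{equation*}
 \bm{u}_r=\frac{\bm{t}-(s/D)\one_N}{\sigma_r},
 \qquad
 \bm{u}_r-\bm{t}=\left(\sigma_r^{-1}-1\right)\bm{t}-\frac{s}{D\sigma_r}\one_N.
\end{equation*}
The first term is bounded by $O(r^{-1})\sqrt V$. The second is bounded by $\sqrt N\,S\,\sqrt2/D=O(r^{-1})$. Together these give \Eqn\ref{eq:plain-pre-identity}.

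The only point needing care is the uniformity of the variance estimate, in particular checking that the centering-induced term $s^2/D^2$ and the mean shift $s/D$ are each $O(r^{-1})$. The symmetric pairs make the augmentation's own mean vanish, so no constant mean term survives. That is why no output-bias correction is needed here, in contrast with the AN two-value construction.
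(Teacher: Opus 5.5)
Your proposal is correct and matches the paper's proof essentially line for line. It uses the same exact statistics $\mu=s/D$ and $\sigma_r^2=1+(v-N)/D-s^2/D^2$, the same uniform $O(r^{-1})$ control of $\sigma_r^{-1}-1$, and the same two-term decomposition of $\bm{u}_r-\bm{t}$; your remarks that $\sigma_r>0$ for every $r\ge1$ and that the finitely many small $r$ are absorbed into the constant are valid refinements the paper leaves implicit.
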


\begin{proof}
Write $s(\bm{x})=\one_N^\top\bm{t}(\bm{x})$ and $D=N+2r$.
The replicated values $-1$
and $1$ cancel in the sum, so
\begin{equation}
    \mu_r(\bm{x})=\frac{s(\bm{x})}{D}.
\end{equation}
They contribute $2r$ to the squared norm, and hence
\begin{align}
 \rho_r(\bm{x})^2
 &=\frac{\norm{\bm{t}(\bm{x})}^2+2r}{D}-\frac{s(\bm{x})^2}{D^2}\\
 &=1+\frac{\norm{\bm{t}(\bm{x})}^2-N}{D}-\frac{s(\bm{x})^2}{D^2}.
 \label{eq:app-plain-pre-variance}
\end{align}
Compactness of $\bm{t}(K)$ gives
$\sup_{\bm{x}\in K}|\rho_r(\bm{x})^2-1|=O(r^{-1})$.  Thus, for all sufficiently
large $r$, $\rho_r$ is uniformly bounded away from zero and
$\rho_r^{-1}-1=O(r^{-1})$ uniformly on $K$.

The first $N$ coordinates of $\operatorname{LN}(\bm{y}_r(\bm{x}))$ are
\begin{equation}
    \bm{u}_r(\bm{x})
    =\frac{\bm{t}(\bm{x})-D^{-1}s(\bm{x})\one_N}{\rho_r(\bm{x})}.
\end{equation}
Consequently,
\begin{equation}
 \bm{u}_r(\bm{x})-\bm{t}(\bm{x})
 =\bigl(\rho_r(\bm{x})^{-1}-1\bigr)\bm{t}(\bm{x})
 -\frac{s(\bm{x})}{D\rho_r(\bm{x})}\one_N.
\end{equation}
Both terms are uniformly $O(r^{-1})$, which proves
\Eqn\ref{eq:plain-pre-identity}.
\end{proof}

\begin{proof}[General PN case of Proposition~\ref{prop:ln-counterparts}]
Write $f(\bm{x})=\bm{W}_2\phi(\bm{t}(\bm{x}))+\bm{b}_2$ and use
$\bm{y}_r(\bm{x})$ from
Lemma~\ref{lem:plain-pre-identity}.  Define
\begin{equation}
 \widehat f_r(\bm{x})
 =
 \begin{bmatrix}\bm{W}_2&\zero\end{bmatrix}
 \phi\!\left(\operatorname{LN}(\bm{y}_r(\bm{x}))\right)+\bm{b}_2,
 \label{eq:plain-pre-network}
\end{equation}
where the zero block ignores the $2r$ augmentation coordinates after activation.
By Lemma~\ref{lem:plain-pre-identity}, the first $N$ normalized coordinates
converge uniformly to $\bm{t}(\bm{x})$.  They and $\bm{t}(K)$ lie in a common compact set
for all sufficiently large $r$, so uniform continuity of $\phi$ proves
uniform convergence to $f$.  If $\phi$ is Lipschitz on compact sets,
\Eqn\ref{eq:plain-pre-identity} yields constants $C_f>0$ and $r_f$ for which
the $O_f(r^{-1})$ bound holds whenever $r\ge r_f$.
\end{proof}

\section{Proofs for Section 7}

\subsection{Proof of Proposition~\ref{prop:rms-one-coordinate-necessity}}
\label{app:rms-one-coordinate-lower-bound}

\begin{proof}[Proof of Proposition~\ref{prop:rms-one-coordinate-necessity}]
If $\operatorname{rank}\bm{W}_1<N$, choose a unit
$\bm{v}\in\ker\bm{W}_1$.  The inputs $\bm{v}$ and $-\bm{v}$ have the same
normalized representation, so the error is at least $1$.  Otherwise $d=N$ and
$\bm{W}_1$ is invertible.  Let $\bm{x}_0$ be the unique solution of
$\bm{W}_1\bm{x}_0+\bm{b}_1=\zero$ and write
$\bm{x}_0=t\bm{v}$ with $t\ge0$ and $\norm{\bm{v}}=1$ (choose any unit
$\bm{v}$ if $\bm{x}_0=\zero$).  For
$-1\le s<\min\{t,1\}$, the representations of $s\bm{v}$ lie on the same
nonzero ray, so their RMSNorm outputs coincide.  This segment has diameter at
least one, giving error at least $1/2$; an endpoint of zero variance is handled
by taking a limit.
\paragraph{LayerNorm extension.}
Set $\bm{C}=\bm{P}_d\bm{W}_1$ and $\bm{c}=\bm{P}_d\bm{b}_1$.  Any two inputs
whose centered affine images lie on the same nonzero ray have the same LN
representation and hence the same network output.  Since
$\operatorname{rank}\bm{C}\le d-1$, if $\operatorname{rank}\bm{C}<N$ we may
choose a unit $\bm{v}\in\ker\bm{C}$; the inputs $\bm{v}$ and $-\bm{v}$ then
have the same centered affine image $\bm{c}$, so the approximation error is at
least $1$.  Otherwise $\operatorname{rank}\bm{C}=N$, which forces $d=N+1$,
and $\bm{C}$ is a bijection from $\R^N$ onto $\one_d^\perp$.  Because
$\bm{c}\in\one_d^\perp$, there is a unique $\bm{x}_0$ satisfying
$\bm{C}\bm{x}_0+\bm{c}=\zero$.  Write $\bm{x}_0=t\bm{v}$ with $t\ge0$ and
$\norm{\bm{v}}=1$, choosing any unit $\bm{v}$ when $\bm{x}_0=\zero$.  For
$-1\le s<\min\{t,1\}$,
\begin{equation}
 \bm{C}(s\bm{v})+\bm{c}=(s-t)\bm{C}\bm{v}
 \label{eq:ln-lower-bound-ray}
\end{equation}
lies on one nonzero ray.  The corresponding inputs form a segment of diameter
$1+\min\{t,1\}\ge1$, so their common network output has uniform error at least
$1/2$.  If the limiting endpoint has zero variance, taking points approaching
it gives the same bound.  Thus the LN obstruction holds for every $d\le N+1$.
\end{proof}

\subsection{Proof of Corollary~\ref{cor:density-transfer}}
\label{app:density-transfer-proof}

\begin{proof}[Proof of Corollary~\ref{cor:density-transfer}]
Given a target $f_\star\in C(K;\R^{d_y})$ and $\varepsilon>0$, density of the
one-hidden-layer MLPs gives some $f\in\mathcal M_N(\phi)$ with
$\norm{f_\star-f}_{\infty,K}<\varepsilon/2$.  Apply
Theorem~\ref{thm:rms-an-general}, \ref{thm:rms-affine-pre}, or
\ref{thm:rms-plain-pre-general} for the three RMSNorm claims, and then
Proposition~\ref{prop:ln-counterparts} for their LayerNorm counterparts.  In
each case choose the approximation error below $\varepsilon/2$; the triangle
inequality gives $\norm{f_\star-\widehat f}_{\infty,K}<\varepsilon$.
\end{proof}

\section{Experimental details}
\label{app:numerical-details}

The activation-family experiments use $d_x=d_y=1$, $K=[-1,1]$, and target
MLPs with parameters
\begin{equation}
 \bm{W}_1=1.15\bm{G}_1,
 \qquad \bm{b}_1=0.45\bm{G}_2,
 \qquad \bm{W}_2=\frac{0.7\bm{G}_3+0.25\one_N^\top}{\sqrt N},
 \qquad \bm{b}_2=0.15\bm{G}_4,
\end{equation}
where all entries of $\bm{G}_1,\bm{G}_2,\bm{G}_3,\bm{G}_4$ are standard normal
draws.  We hold each sampled target fixed while varying the augmentation
parameter and set the LN and RMSNorm approximant parameters by the explicit
constructions, without optimization.  We evaluate the maximum absolute output
error on an $8193$-point grid in $K$ using float64 arithmetic and exact
normalization.  This grid maximum estimates, but does not certify, the
continuous uniform error.

Figure~\ref{fig:five-construction-rates} fixes $N=16$ and averages the grid
maximum over six independently sampled targets.  The extended sweep in
Figures~\ref{fig:activation-family-sweep} and
\ref{fig:activation-family-sweep-part2} uses three activations in each of the
five regimes, widths $N\in\{4,8,16,32\}$, six targets per width, and
$M,r\in\{2^0,\ldots,2^{10}\}$.  It contains $3960$ evaluations per
normalization, with no nonfinite errors.  The shifted-ReLU experiments below
use a different target setup, described in their own subsection.

Table~\ref{tab:five-construction-slopes} summarizes one representative
activation per regime.  For each target--width pair, we fit a slope over the
final six log--log points, then report the median and interquartile range of
the $24$ fitted slopes.  A slope near $-q$ is consistent with error decaying
like the control parameter raised to $-q$ over the observed tail.
Because the error constant may depend on the target realization and its width
$N$, these pooled summaries describe trends across the tested widths rather
than an error bound with a constant uniform in $N$.

\begin{table}[ht]
\centering
\small
\setlength{\tabcolsep}{3.5pt}
\caption{Predicted and observed tail slopes for one activation in each regime.
Observed values are medians (interquartile ranges) over $24$ target--width
pairs.}
\label{tab:five-construction-slopes}
\begin{tabular}{lccc}
\toprule
Construction & Predicted RMS/LN & RMS observed & LN observed\\
\midrule
AN, unbounded (ReLU) & $-2/-1$ & $-1.998$ ($-1.999,-1.997$) & $-0.953$ ($-0.977,-0.886$)\\
AN, general ($\tanh$) & $-1/-1$ & $-0.956$ ($-0.975,-0.931$) & $-0.968$ ($-0.982,-0.940$)\\
PN, affine ($\tanh$) & $-2/-2$ & $-1.999$ ($-1.999,-1.997$) & $-1.999$ ($-1.999,-1.998$)\\
PN, homogeneous (ReLU) & $-2/-2$ & $-1.997$ ($-1.999,-1.994$) & $-1.998$ ($-1.999,-1.995$)\\
PN, general ($\tanh$) & $-1/-1$ & $-0.956$ ($-0.970,-0.913$) & $-0.966$ ($-0.979,-0.937$)\\
\bottomrule
\end{tabular}
\end{table}

\subsection{Extended activation-family verification}

Figures~\ref{fig:activation-family-sweep} and
\ref{fig:activation-family-sweep-part2} test two further activations alongside
the representative one in each regime.  Their rows cover unbounded AN,
general AN, affine PN, homogeneous PN, and general PN, respectively.
The first row shows the different $M^{-1}$ LN and $M^{-2}$ RMSNorm trends.
The general AN and PN rows approach the $r^{-1}$ reference slope, while the
affine and homogeneous PN rows approach the common $M^{-2}$ slope.  Some
curves deviate at small augmentation values; across the tested activations,
their tails are broadly consistent with the corresponding predicted rates.

\begin{figure}[!t]
    \centering
    \includegraphics[width=\textwidth]{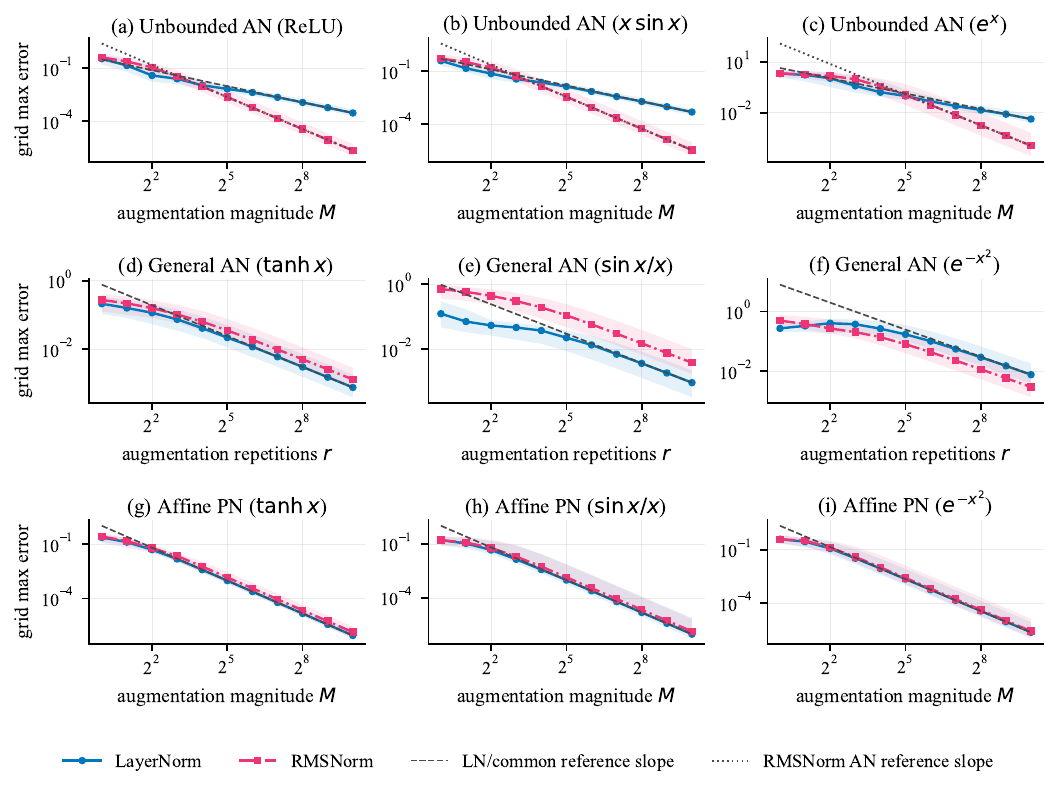}
    \caption{Extended activation-family tests: unbounded AN, general AN, and
    affine PN.  Each blue or pink curve is the median grid maximum error over
    $24$ target--width pairs for LN or RMSNorm, and the shaded region is its
    interquartile range.  Gray dashed and dotted lines illustrate reference
    slopes, not error bounds.  Both axes are logarithmic.}
    \label{fig:activation-family-sweep}
\end{figure}

\begin{figure}[!t]
    \centering
    \includegraphics[width=\textwidth]{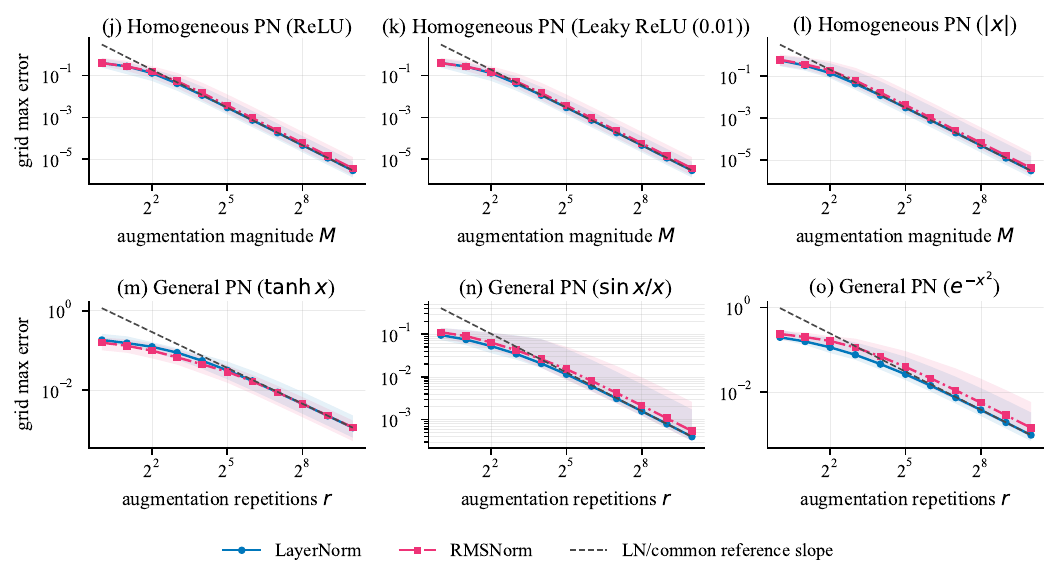}
    \caption{Extended activation-family tests: homogeneous and general PN.
    Curves and shaded regions have the same meaning as in
    Figure~\ref{fig:activation-family-sweep}; the gray dashed lines illustrate
    the common reference slopes.}
    \label{fig:activation-family-sweep-part2}
\end{figure}
\FloatBarrier
\subsection{Shifted-ReLU finite-width barrier}

Both experiments in this subsection fix $K=[-1,1]$ and
$\phi(t)=\operatorname{ReLU}(t-10)$; they use explicit constructions without
fitting parameters.  Figure~\ref{fig:shifted-relu-explicit} takes $N=1$ and
$f(x)=\phi(10+2x)=\operatorname{ReLU}(2x)$.  For normalized width $d\le 10^2$,
the coordinate bound $|u_i|\le\sqrt d$ makes every no-affine PN hidden
activation vanish.  Such networks are constant, so their uniform error for
this target is at least $1$.  This bound is a necessary width barrier, not
the exact width at which the plotted constructions turn on.

Figures~\ref{fig:shifted-relu-explicit}(a) and
\ref{fig:shifted-relu-explicit}(b) show how the constructed RMS-PN and LN-PN
outputs approach the target as $d$ grows.  Figure~\ref{fig:shifted-relu-explicit}(c)
plots their grid maximum errors: after the finite-width obstruction, their
tails are consistent with $d^{-1}$ decay.  With affine parameters, the widths
remain $2$ and $3$, and Figure~\ref{fig:shifted-relu-explicit}(d) shows the
$M^{-2}$ trend.  The growing-width behavior for this fixed activation does
not contradict Proposition~\ref{prop:fixed-width-obstruction}, whose
activation depends on $d$.

\begin{figure}[!htbp]
    \centering
    \includegraphics[width=0.96\textwidth]{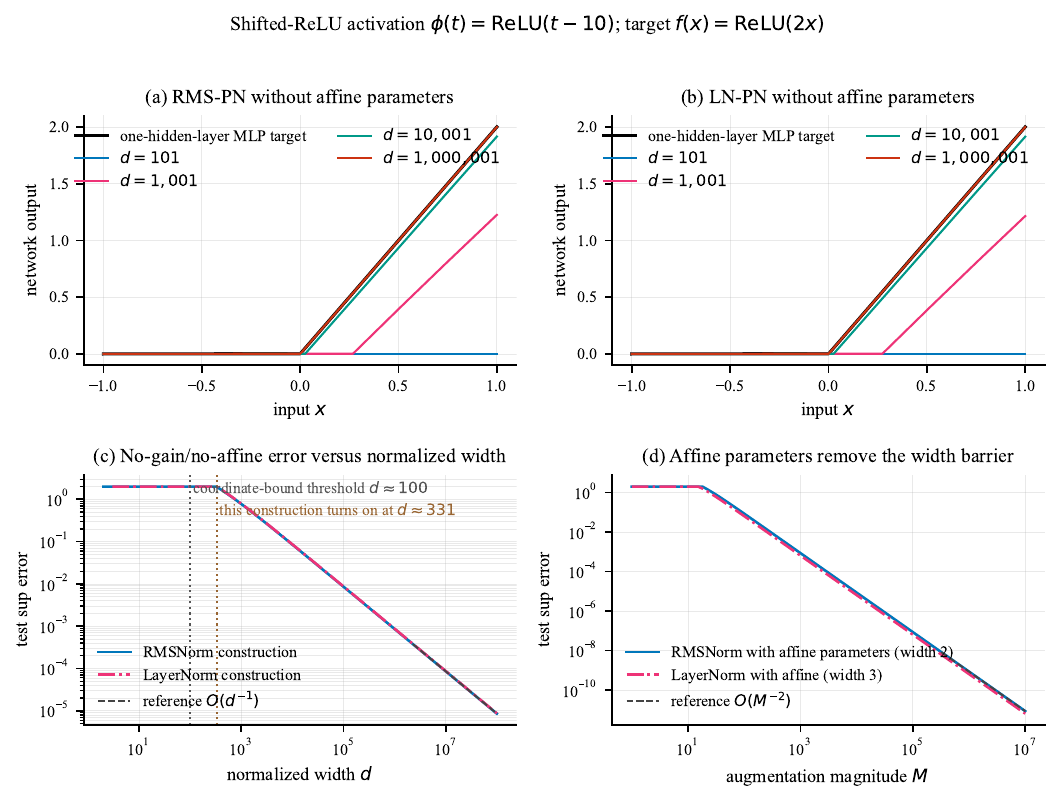}
    \caption{Shifted-ReLU PN constructions for
    $f(x)=\operatorname{ReLU}(2x)$.  (a,b) Function-level recovery as width
    grows.  (c) The coordinate-bound barrier $d=10^2$ and subsequent
    grid-error decay consistent with $d^{-1}$ without affine parameters.
    (d) Fixed-width affine constructions with grid-error decay consistent
    with $M^{-2}$.}
    \label{fig:shifted-relu-explicit}
\end{figure}

\FloatBarrier
Figure~\ref{fig:shifted-relu-random} repeats the comparison with random
targets of widths $N\in\{4,8,16,32\}$, six at each width.  For each target,
let $z(x)=\bm{W}_1x+\bm{b}_1$ with the Gaussian parameters above and set
$f(x)=\bm{W}_2\phi(10+z(x))=\bm{W}_2\operatorname{ReLU}(z(x))$.
For each target, the relative grid maximum error is the maximum absolute
approximation error on the test grid divided by the maximum absolute target
value on that grid.
Figures~\ref{fig:shifted-relu-random}(a) and
\ref{fig:shifted-relu-random}(b) show one target and selected no-affine
approximants.  Across all $24$ targets,
Figure~\ref{fig:shifted-relu-random}(c) shows median relative grid maximum
error decreasing approximately like $d^{-1}$ after the barrier; the
fixed-width affine curves in Figure~\ref{fig:shifted-relu-random}(d) follow
the $M^{-2}$ reference slope.  Shaded bands show interquartile ranges.

\clearpage
\begin{figure}[!ht]
    \centering
    \includegraphics[width=0.96\textwidth]{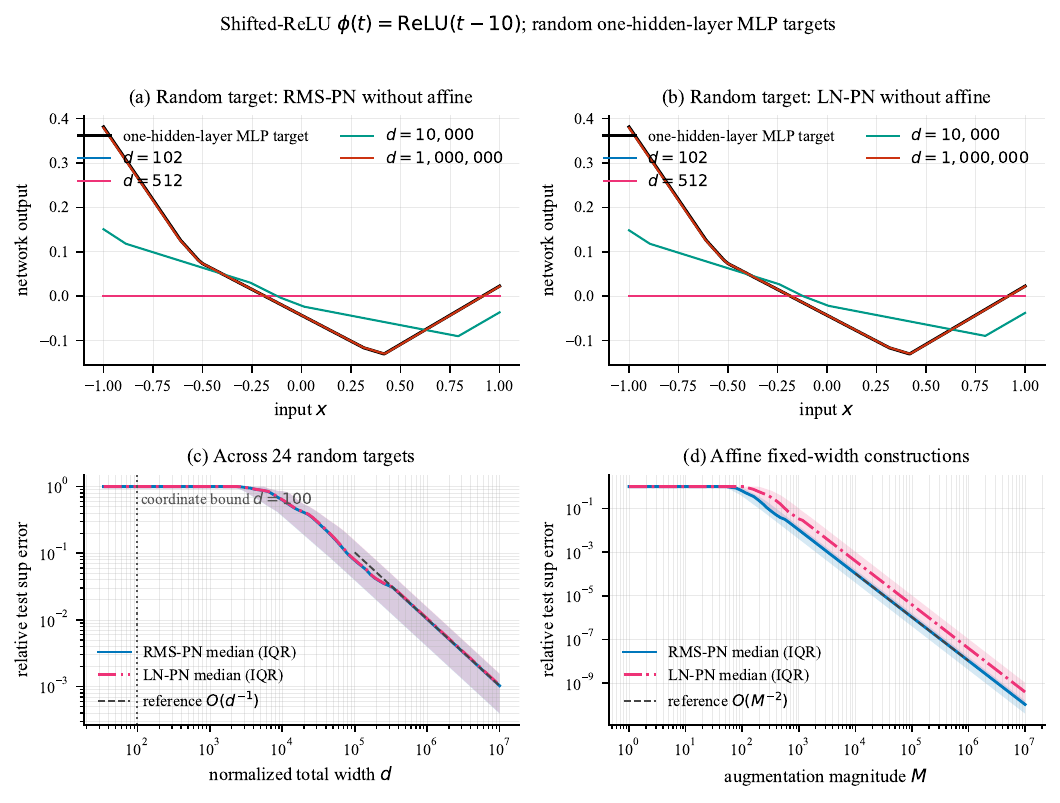}
    \caption{Shifted-ReLU PN verification over $24$ random one-hidden-layer MLP
    targets.  (a,b) One representative target and selected approximants.
    (c,d) Median relative grid maximum errors and interquartile ranges for
    the width-growing no-affine and fixed-width affine constructions,
    respectively.  Gray lines illustrate reference slopes, not error bounds.}
    \label{fig:shifted-relu-random}
\end{figure}

\end{document}